\theoremstyle{plain}
\newtheorem{theorem}{Theorem}[section]
\newtheorem{lemma}[theorem]{Lemma}
\theoremstyle{definition}
\newtheorem{assumption}[theorem]{Assumption}
\theoremstyle{remark}
\def\eqref#1{equation~\ref{#1}}
\def\1{\bm{1}}
\DeclareMathAlphabet{\mathsfit}{\encodingdefault}{\sfdefault}{m}{sl}
\SetMathAlphabet{\mathsfit}{bold}{\encodingdefault}{\sfdefault}{bx}{n}
\def\gA{{\mathcal{A}}}
\def\gM{{\mathcal{M}}}
\def\gO{{\mathcal{O}}}
\def\gS{{\mathcal{S}}}
\def\sR{{\mathbb{R}}}
\newcommand{\E}{\mathbb{E}}
\newcommand{\R}{\mathbb{R}}
\DeclareMathOperator*{\argmax}{arg\,max}
\newcommand{\name}{{SIPO}}
\newcommand{\namerbf}{{SIPO-RBF}}
\newcommand{\namewd}{{SIPO-WD}}
\newcommand{\cmrd}[1]{{#1}}
\newcommand{\sctwo}{{StarCraft \uppercase\expandafter{\romannumeral2}}}
\title{Iteratively Learn Diverse Strategies with State Distance Information}
\author{%
  Wei Fu$^{1\textrm{\fl}}$, Weihua Du\thanks{Equal Contribution}\,\,$^{1}$, Jingwei Li$^{* 1}$, Sunli Chen$^{1}$, Jingzhao Zhang$^{12}$, Yi Wu$^{12\textrm{\sh}}$ \\
  $^{1}$ IIIS, Tsinghua University, $^{2}$ Shanghai Qi Zhi Institute\\
  $^{\textrm{\fl}}$ \texttt{fuwth17@gmail.com}, $^{\textrm{\sh}}$\texttt{jxwuyi@gmail.com}\\
}
\begin{document}

\maketitle

\begin{abstract}
In complex reinforcement learning (RL) problems, policies with similar rewards may have substantially different behaviors. It remains a fundamental challenge to optimize rewards while also discovering as many \emph{diverse} strategies as possible, which can be crucial in many practical applications. Our study examines two design choices for tackling this challenge, i.e., \emph{diversity measure} and \emph{computation framework}. First, we find that with existing diversity measures, visually indistinguishable policies can still yield high diversity scores. To accurately capture the behavioral difference, we propose to incorporate the state-space distance information into the diversity measure. In addition, we examine two common computation frameworks for this problem, i.e., population-based training (PBT) and iterative learning (ITR). We show that although PBT is the precise problem formulation, ITR can achieve comparable diversity scores with higher computation efficiency, leading to improved solution quality in practice. Based on our analysis, we further combine ITR with two tractable realizations of the state-distance-based diversity measures and develop a novel diversity-driven RL algorithm, \emph{State-based Intrinsic-reward Policy Optimization} (SIPO), with provable convergence properties. We empirically examine SIPO across three domains from robot locomotion to multi-agent games. In all of our testing environments, SIPO consistently produces strategically diverse and human-interpretable policies that cannot be discovered by existing baselines.
\end{abstract}

\section{Introduction}

A consensus in deep learning (DL) is that different local optima have similar mappings in the functional space, leading to similar losses to the global optimum~\cite{venturi2018novalley,roughgarden2020beyond,ma_why_2021}. 
Hence, via stochastic gradient descent (SGD), most DL works only focus on the final performance without considering \emph{which} local optimum SGD discovers.
However, in complex reinforcement learning (RL) problems, the policies associated with different local optima can exhibit significantly different behaviors~\cite{boatracing,liu2021motor,vinyals_grandmaster_2019}.
Thus, it is a fundamental problem for an RL algorithm to not only optimize rewards but also discover as many diverse strategies as possible.
A pool of diversified policies can be further leveraged towards a wide range of applications, including the discovery of emergent behaviors~\citep{liu2019emergent, baker_emergent_2020,tang_discovering_2021}, generating diverse dialogues~\citep{dialogue}, designing robust robots~\citep{cully_robots_2015,kumar_one_2020,gupta2021embodied}, and enhancing human-AI collaboration~\citep{lupu_trajectory_2021,charakorn2022incompatible,cui2023adversarialhanani}.

Obtaining diverse RL strategies requires a quantitative method for measuring the difference (i.e., \emph{diversity}) between two policies. However, how to define such a measure remains an open challenge.
Previous studies have proposed various diversity measures, such as comparing the difference between the action distributions generated by policies~\cite{sun2020novel,lupu_trajectory_2021,zhou_continuously_2022}, computing probabilistic distances between the state occupancy of different policies~\cite{masood2019diversity}, or measuring the mutual information between states and policy identities~\cite{eysenbach_diversity_2018}. However, it remains unclear which measure could produce the best empirical performance. Besides, the potential pitfalls of these measures are rarely discussed. 

In addition to diversity measures, there are two common computation frameworks for discovering diverse policies, including population-based training (PBT) and iterative learning (ITR). PBT directly solves a constrained optimization problem by learning a collection of policies simultaneously, subject to policy diversity constraints~\citep{parker-holder_effective_2020,lupu_trajectory_2021,charakorn2022incompatible}. Although PBT is perhaps the most popular framework in the existing literature, it can be computationally challenging~\citep{omidshafiei2020navigating} since the number of constraints grows quadratically with the number of policies.
The alternative framework is ITR, which iteratively learns a single policy that is sufficiently different from previous policies~\citep{masood2019diversity,zhou_continuously_2022}.
ITR is a greedy relaxation of the PBT framework and it largely simplifies the optimization problem in each iteration.
However, the performance of the ITR framework has not been theoretically analyzed yet, and it is often believed that ITR can be less efficient due to its sequential nature.

We provide a comprehensive study of the two aforementioned design choices. First, we examine the limitations of existing diversity measures in a few representative scenarios, where two policies outputting very different action distributions can still lead to similar state transitions. 
In these scenarios, state-occupancy-based measures are not sufficient to truly reflect the underlying behavior differences of the policies either.
By contrast, we observe that diversity measures based on \emph{state distances} can accurately capture the visual behavior differences of different policies. 
Therefore, we suggest that an effective diversity measure should explicitly incorporate state distance information for the best practical use.
Furthermore, for the choice of computation framework, we conduct an in-depth analysis of PBT and ITR. We provide theoretical evidence that ITR, which has a simplified optimization process with fewer constraints, can discover solutions with the same reward as PBT while achieving \emph{at least half} of the diversity score.
This finding implies that although ITR is a greedy relaxation of PBT, their optimal solutions can indeed have comparable qualities.
Furthermore, note that policy optimization is much simplified in ITR, which suggests that ITR can result in much better empirical performances and should be preferred in practice.

Following our insights, we combine ITR and a state-distance-based diversity measure to develop a generic and effective algorithm, \emph{State-based Intrinsic-reward Policy Optimization (\name)}, for discovering diverse RL strategies.
In each iteration, we further solve this constrained optimization problem via the Lagrange method and two-timescale gradient descent ascent (GDA)~\citep{lin2020gradient}.
{We theoretically prove that} 
our algorithm is guaranteed to converge to a neighbor of $\epsilon$-stationary point.
Regarding the diversity measure, we provide two practical realizations, including a straightforward version based on the RBF kernel and a more general learning-based variant using Wasserstein distance.

We evaluate {\name} in three domains ranging from single-agent continuous control to multi-agent games: Humanoid locomotion~\cite{makoviychuk2021isaac}, StarCraft Multi-Agent Challenge~\citep{samvelyan2019starcraft}, and Google Research Football (GRF)~\citep{kurach2020google}.
Our findings demonstrate that {\name} surpasses baselines in terms of population diversity score across all three domains.
Remarkably, our algorithm can successfully discover 6 distinct human-interpretable strategies in the GRF 3-vs-1 scenario and 4 strategies in two 11-player GRF  scenarios, namely counter-attack, and corner, without any domain-specific priors, which are beyond the capabilities of existing algorithms.

\section{Related Work}

\textbf{Diversity in RL.}
It has been shown that policies trained under the same reward function can exhibit significantly different behaviors~\citep{boatracing,liu2021motor}. Merely discovering a single high-performing solution may not suffice in various applications~\citep{cully_robots_2015,vinyals_grandmaster_2019,kumar_one_2020}.
As such, the discovery of a diverse range of policies is a fundamental research problem, garnering attention over many years~\citep{miller_genetic_1996,deb_finding_2010,lee_diversify_2022}.
Early works are primarily based on multi-objective optimization~\citep{mouret_illuminating_2015,pugh_quality_2016,ma_efficient_2020,nilsson_policy_2021,pierrot_diversity_2022}, which assumes a set of reward functions is given in advance. In RL, this is also related to reward shaping~\citep{ng1999policy,babes_social_2008,devlin_theoretical_2011,tang_discovering_2021}. We consider learning diverse policies without any domain knowledge.

\textbf{Population-based training (PBT)} is the most popular framework for diverse solutions by jointly learning separate policies. Representative works include evolutionary computation~\cite{wang2019poet,long2019evolutionary,parker-holder_effective_2020}, league training~\citep{vinyals_grandmaster_2019,jaderberg_human-level_2019}, computing Hessian matrix~\cite{parker-holder_ridge_2020}  or constrained optimization with a population diversity measure~\cite{lupu_trajectory_2021,zhao2021maximum,li_celebrating_2021,liu_unifying_2021,charakorn2022incompatible}.
An improvement is to learn a latent variable policy instead of separate ones.
Prior works have incorporated different domain knowledge to design the latent code, such as action clustering~\citep{wang_rode_2020}, agent identities~\citep{li_celebrating_2021,marlrolediag} or prosocial level~\citep{peysakhovich_consequentialist_2018,baker_emergent_2020}.
The latent variable can be also learned in an unsupervised fashion, such as in DIYAN~\citep{eysenbach_diversity_2018} and its variants~\citep{kumar_one_2020,osa_discovering_2022}.
\citet{domino} learns diverse policies with hard constraints on rewards to ensure the derived policies are (nearly) optimal, potentially hindering policies with disparate reward scales. On the other hand, our method prioritizes diversity and fully accepts sub-optimal strategies.

\textbf{Iterative learning ({ITR})} simplifies PBT by only optimizing a single policy in each iteration and forcing it to behave differently w.r.t. previously learned ones~\cite{masood2019diversity,sun2020novel,zhou_continuously_2022}.
While some ITR works require an expensive clustering process before each iteration~\cite{zhang2019novel_task} or domain-specific features~\cite{zahavy2021discovering}, we consider domain-agnostic ITR in an end-to-end fashion.
Besides, \citet{pacchiano2020learning_to_score} learns a kernel-based score function to iteratively guide policy optimization. The score function is conceptually similar to {\namewd} but is applied to a parallel setting with more restricted expressiveness power.

\textbf{Diversity Measure.}
Most previous works considered diversity measures on action distribution and state occupancy. For example, measures such as Jensen-Shannon divergence~\cite{lupu_trajectory_2021} and cross-entropy~\cite{zhou_continuously_2022} are defined over policy distributions to encourage different policies to take different actions on the same state, implicitly promoting the generation of diverse trajectories. Other measures such as maximum mean discrepancy~\cite{masood2019diversity} maximize the probability distance between the state distributions induced by two policies. However, these approaches can fail to capture meaningful behavior differences between two policies in certain scenarios, 
as we will discuss in Section~\ref{sec:case:measure}. 
There also exist specialized measures, such as cross-play rewards~\cite{charakorn2022incompatible}, which are designed for cooperative multi-agent games. 
It is worth noting that diversity measures are closely related to exploration criteria~\cite{bellemare2016countexploration,hazan2019maxentexpl,burda2018exploration,state-marginal-matching} and skill discovery~\cite{campos2020explore,liu2021behavior,jiang2022unsupervised}, where a diversity surrogate objective is often introduced to encourage broad state coverage.
However, this paper aims to explicitly discover mutually distinct policies. Our diversity measure depends on a function that computes the distance between states visited by two policies.

\vspace{-1.5mm}
\section{Preliminary}
\vspace{-1.5mm}

\textbf{Notation:}
We consider POMDP~\citep{spaan2012partially} defined by $\gM = \langle \gS, \gA, \gO, r, P, O, \nu, H\rangle$.
$\gS$ is the state space.
$\gA$ and $\gO$ are the action and observation space.
$r : \gS \times \gA \to \sR$ is the reward function.
$O: \gS \to \gO$ is the observation function.  $H$ is the horizon.
$P$ is the transition function.
At timestep $h$, the agent
receives an observation $o_h=O(s_h)$ and outputs an action $a_h\in\gA$ w.r.t. its policy $\pi : \gO \to \triangle\left(\gA\right)$.
The RL objective $J(\pi)$ is defined by
$
J(\pi)=\mathbb{E}_{(s_h,a_h)\sim(P,\pi)}\left[\sum_{h=1}^H r(s_h,a_h)\right].
$
The above formulation can be naturally extended to cooperative multi-agent settings, where $\pi$ and $R$ correspond to the joint policy and the shared reward.
We follow the standard POMDP notations for conciseness. Our method will be evaluated in both single-agent tasks and complex cooperative multi-agent scenarios. Among them, multi-agent environments encompass a notably more diverse range of potential winning strategies, and hence offer an apt platform for assessing the effectiveness of our method.
Moreover, in this paper, we \textbf{assume access to object-centric information and features} rather than pure visual observations to simplify our discussion. {We remark that although we restrict the scope of this paper to states, our method can be further extended to high-dimensional inputs (e.g. images, see App.~\ref{app:ablation-state-input}) or tabular MDPs via representation learning~\cite{laplacian,prototypical-repr}}.

Finally, to discover diverse strategies, 
we aim to learn a set of $M$ policies $\{\pi_i\}_{i=1}^M$ such that all of these policies are locally optimal under $J(\cdot)$ but mutually distinct subject to some diversity measure $D(\cdot,\cdot): \triangle\times\triangle\to\R$, which captures the difference between two policies.

\textbf{Existing Diversity Measures:}
We say a diversity measure $D$ is defined over action distribution if it can be written as

\vspace{-4mm}
\begin{equation}\label{eq:measure-action}
    D (\pi_i,\pi_j)=\E_{s\sim q(s)}\left[\tilde{D}_\gA \left(\pi_i(\cdot \mid s)\Vert \pi_j(\cdot \mid s)\right)\right],
\end{equation}
\vspace{-2mm}

where $q$ is an occupancy measure over states,
$\tilde{D}_\gA:\triangle\times\triangle\to\R$ measures the difference between action distributions.
$\tilde{D}_\gA$ can be any probability distance as defined in prior works~\cite{sun2020novel,lupu_trajectory_2021,zhou_continuously_2022,parker-holder_effective_2020}.

Denote the state occupancy of $\pi$ as $q_\pi$. We say a diversity measure is defined over state occupancy if it can be written as

\vspace{-4mm}
\begin{equation}
D(\pi_i,\pi_j)=\tilde{D}_\gS \left( q_{\pi_i} \Vert q_{\pi_j} \right),
\label{eq:state-occup}
\end{equation}
\vspace{-2mm}

which can be realized as an integral probability metric~\cite{masood2019diversity}.
We remark that $q_\pi$ is usually intractable.  

In addition to diversity measures, we present two popular computation frameworks 
for this purpose.

\textbf{Population-Based Training (PBT):}
PBT is a straightforward formulation by jointly learning $M$ policies $\{\pi_i\}_{i=1}^M$ subject to pairwise diversity constraints, i.e.,

\vspace{-4mm}
\begin{footnotesize}\begin{align}
    \max_{\{\pi_i\}}\, \sum_{i=1}^M J(\pi_i) &
        \;\;\textrm{s.t.} \; D(\pi_j,\pi_k)\ge\delta,\forall j,k\in [M],\,j\neq k,
        \label{eq:pbt-co}
\end{align}\end{footnotesize}
\vspace{-2mm}

where $\delta$ is a threshold. In our paper, we consistently refer to the aforementioned computation framework as "PBT", rather than adjusting hyperparameters~\citep{jaderberg_population_2017}. Despite a precise formulation, PBT poses severe optimization challenges due to mutual constraints.


\textbf{Iterative Learning (ITR):} {ITR} is a greedy approximation of PBT by iteratively learning novel policies.
In the $i$-th ($1\le i\le M$) iteration, {ITR} solves
\vspace{-4mm}

\begin{footnotesize}
\begin{align}
        \pi_i^\star =\arg\max_{\pi_i} J(\pi_i)&
        \;\;\textrm{s.t.} \; D(\pi_i,\pi_j^\star)\ge\delta,\forall 1\le j< i.
        \label{eq:iter-co}
\end{align}
\end{footnotesize}

\vspace{-2mm}
$\pi_j^\star$ is recursively defined by the above equation. Compared with PBT, ITR trades off wall-clock time for less required computation resources (e.g., GPU memory) and performs open-ended training (i.e., the population size $M$ does not need to be fixed at the beginning of training).

\section{Analysis of Existing Diversity-Discovery Approaches}
\label{sec:analysis-component}
In this section, we conduct both quantitative and theoretical analyses of existing approaches to motivate our method.
We first discuss diversity measures in Sec.~\ref{sec:case:measure} and then compare computation frameworks, namely PBT and {ITR}, in Sec.~\ref{sec:case:framework}.

\subsection{A Common Missing Piece in Diversity Measure: State Distance}
\label{sec:case:measure}

\begin{figure}[tb]
\hspace{3mm}
\begin{minipage}{0.18\textwidth}
\centering
\includegraphics[width=\textwidth]{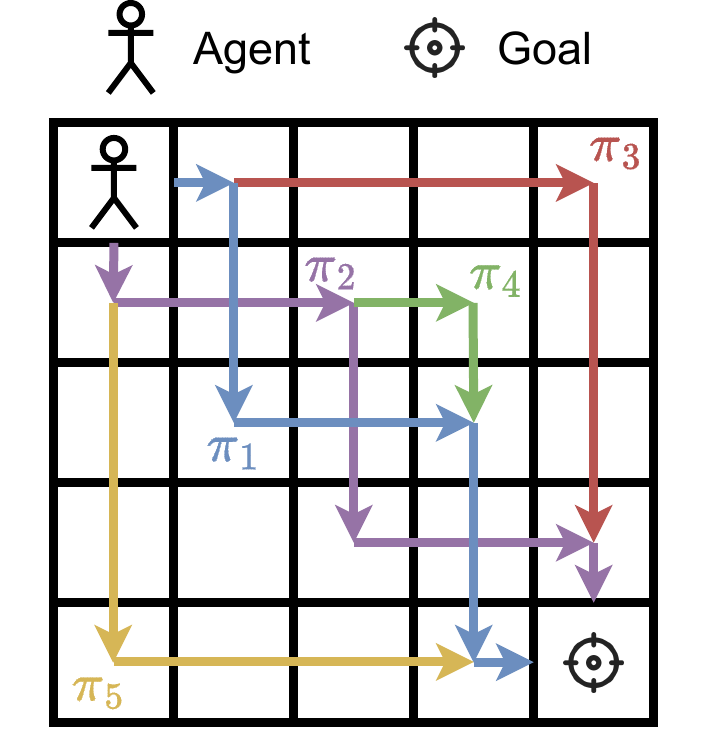}
\end{minipage}
\hfill
\begin{minipage}{0.77\textwidth}
\centering
\scriptsize
\captionof{table}{\small{Diversity measures of the grid-world example.
Computation details can be found in App.~\ref{app:addtional}. 
}}
\vspace{1mm}
\begin{tabular}{cccccccc}
\toprule  
 & \multirow{2}{*}{human}&\multicolumn{4}{c}{action-based} & \multicolumn{2}{c}{state-distance-based}\\
\cmidrule(lr){3-6} \cmidrule(lr){7-8}
 & & KL & $\textrm{JSD}_1$ & $\textrm{JSD}_0$/EMD & $L_2$ norm & $L_2$ norm & EMD\\
\midrule
$D(\pi_1,\pi_2)$ & small & $\bm{+\infty}$ & $\bm{\log 2}$ & $\bm{1/2}$  & $\bm{\sqrt{7}}$ & $2\sqrt{2}$ & $5.7$ \\
$D(\pi_1,\pi_3)$ & large & $\bm{+\infty}$ & $\bm{\log 2}$ &  $1/8$ & $1$ & $\bm{2\sqrt{6}}$ & $\bm{11.3}$\\  \bottomrule
\end{tabular}
\label{tab:novel-measure}
\end{minipage}
\captionof{figure}{\small{(left) A grid-world environment with 5 different optimal policies.
Intuitively, $D(\pi_1,\pi_2)<D(\pi_1,\pi_3)$ and $D(\pi_3,\pi_4)<D(\pi_3,\pi_5)$.
However, action-based measures can give $D_\gA(\pi_1,\pi_2)\ge D_\gA(\pi_1,\pi_3)$ and state-occupancy-based measures can give $D(\pi_3,\pi_4)=D(\pi_3,\pi_5)$.}}
\label{fig:gw-env}
\vspace{-3mm}
\end{figure}

The perception of diversity among humans primarily relies on the level of dissimilarity within the state space, which is measured by a distance function.
However, the diversity measures outlined in Eq.~(\ref{eq:measure-action}) and Eq.~(\ref{eq:state-occup}) completely fail to account for such crucial information.
In this section, we provide a detailed analysis to instantiate this observation with concrete examples and propose a novel diversity measure defined over state distances.

First, we present a synthetic example to demonstrate the limitations of current diversity measures. Our example consists of a grid-world environment with a single agent and grid size $N_G$. The agent starts at the top left of the grid-world and must navigate to the bottom right corner, as shown in Fig.~\ref{fig:gw-env}. While $N_G$ can be large in general, we illustrate with $N_G=5$ for simplicity. We draw five distinct policies, denoted as $\pi_1$ through $\pi_5$, which differ in their approach to navigating the grid-world.
Consider $\pi_1$, $\pi_2$, and $\pi_3$ first.
Although humans may intuitively perceive that policies $\pi_1$ and $\pi_2$, which move along the diagonal, are more similar to each other than to $\pi_3$, which moves along the boundary, diversity measures based on actions can fail to reflect this intuition, as shown in Table~\ref{tab:novel-measure}.
Then, let's switch to policies $\pi_3$, $\pi_4$, and $\pi_5$. We find that state-occupancy-based diversity measures are unable to differentiate between $\pi_4$ and $\pi_5$ in contrast to $\pi_3$. This is because the states visited by $\pi_3$ are entirely disjoint from those visited by both $\pi_4$ and $\pi_5$. However, humans would judge $\pi_5$ to be more distinct from $\pi_3$ than $\pi_4$ because both $\pi_3$ and $\pi_4$ tend to visit the upper boundary.

Next, we consider a more realistic and complicated multi-agent football scenario, i.e., the Google Research Football~\cite{kurach2020google} environment, in Fig.~\ref{fig:gw-fb}, where an idle player in the backyard takes an arbitrary action without involving in the attack at all. 
Although the idle player stays still with no effect on the team strategy, action-based measures can produce high diversity scores.
This example underscores a notable issue. If action-based measures are leveraged to optimize diversity, the resultant policies can produce visually similar behavior.
While it can be possible to exclude idle actions by modifying task rewards, it requires domain-specific hacks and engineering efforts. The issue of idle actions exists even in such popular MARL benchmarks. Similar issues have also been observed in previous works~\cite{lupu_trajectory_2021}.

To summarize, existing measures suffer from a significant limitation --- they only compare the behavior trajectories \emph{implicitly} through the lens of action or state distribution without \emph{explicitly measuring state distance}. Specifically, action-based measures fail to capture the behavioral differences that may arise when similar states are reached via different actions. Similarly, state occupancy measures do not quantify \emph{the degree of dissimilarity} between states. To address this limitation, we propose a new diversity measure that explicitly takes into account the distance function in state space:
\begin{equation}
D(\pi_i,\pi_j)=
\E_{(s,s^\prime)\sim \gamma}\left[ g\left(d\left( s,s^\prime\right)\right) \right],
\label{eq:state-dist}
\end{equation}
$d$ is a distance metric over $\gS\times\gS$. $g:\R^+\to\R$ is a monotonic cost function. $\gamma\in\Gamma(q_{\pi_i},q_{\pi_j})$ is a distribution over state pairs. $\Gamma(q_{\pi_i},q_{\pi_j})$ denotes the collection of all distributions on $\gS\times\gS$ with marginals $q_{\pi_i}$ and $q_{\pi_j}$ on the first and second factors respectively.
The cost function $g$ is a notation providing a generalized and unified definition. It also contributes to training stability by scaling the raw distance.
We highlight that Eq.~(\ref{eq:state-dist}) computes the cost on individual states before taking expectation, and therefore prevents information loss of taking the average over the entire trajectory (e.g. the DvD score~\cite{parker-holder_effective_2020}).
We also note that states are consequences of performed actions. Hence, a state-distance-based measure also implicitly reflects the (meaningful) differences in actions between two policies.
We compute two simple measures based on state distance, i.e., the $L_2$ norm and the Earth Moving Distance (EMD), for the grid-world example and present results in Table~\ref{tab:novel-measure}. These measures are consistent with human intuition.

\begin{figure*}[t]
\begin{minipage}[c]{0.3\textwidth}
        \centering
        \includegraphics[width=0.82\textwidth]{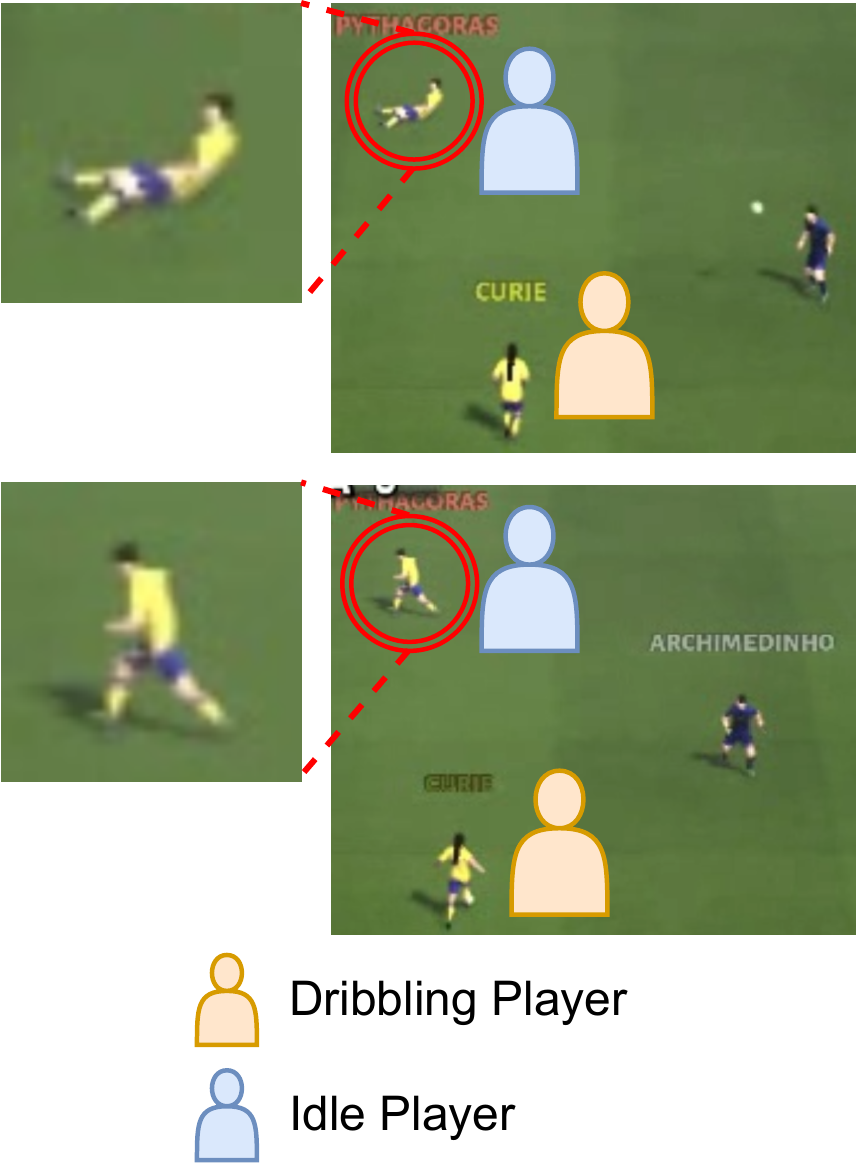}
        \caption{\small{
        A counter-example for action-based diversity measure: in a football game, we can achieve a high diversity score by simply asking a single idle player to output random actions, which does not affect the high-level gameplay strategy at all. 
        }}
        \label{fig:gw-fb}
    \end{minipage}
    \hfill
    \begin{minipage}[c]{0.68\textwidth}
    \vspace{-3mm}
        \centering
        \includegraphics[width=\textwidth]{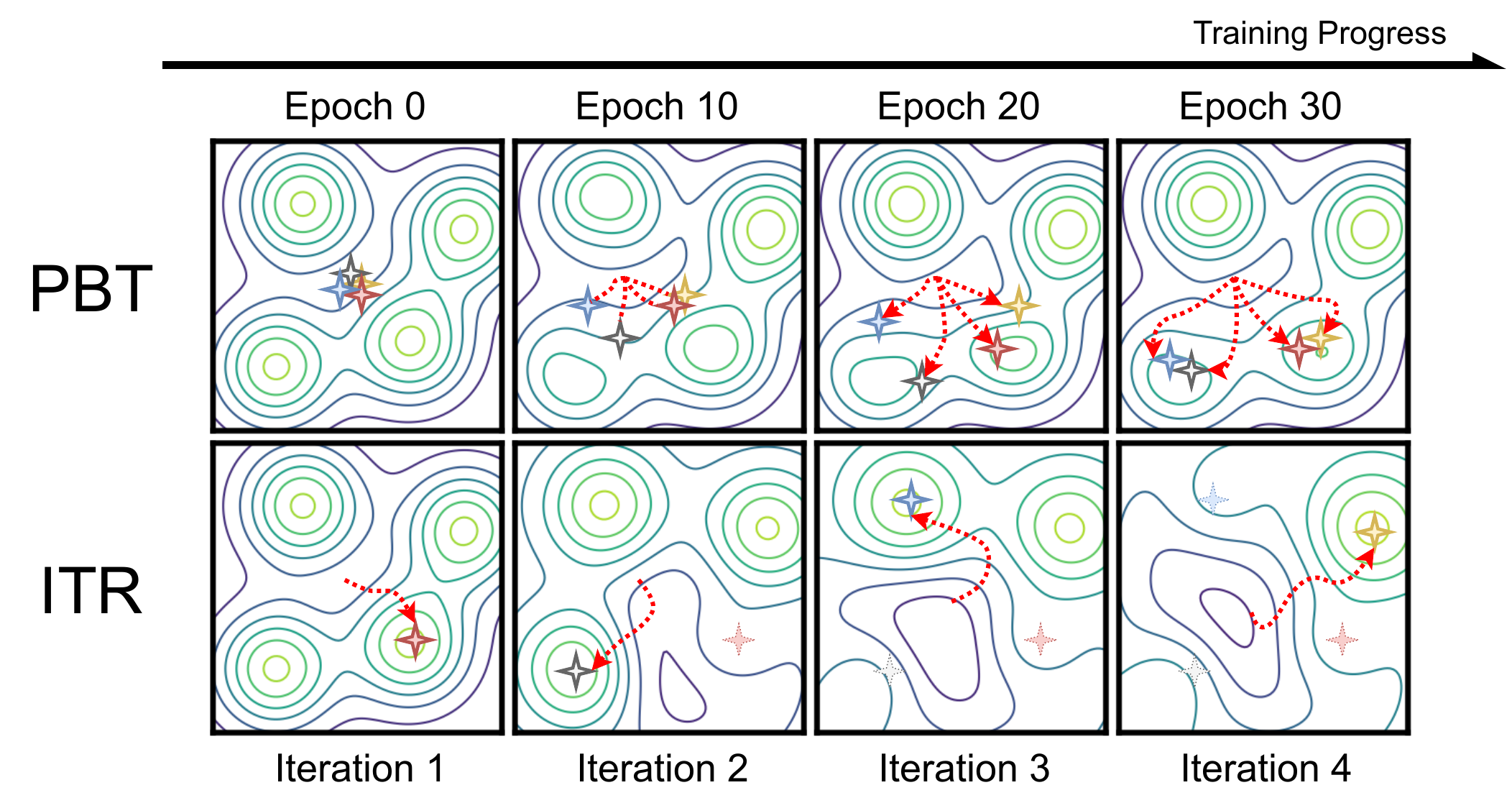}
        \vspace{-3mm}
        \caption{
        Illustration of the learning process of PBT and {ITR} in a 2-D navigation environment with 4 modes.
        PBT will not uniformly converge to different landmarks as computation can be either too costly or unstable.
        By contrast, {ITR} repeatedly excludes a particular landmark, such that policy in the next iteration can continuously explore until a novel landmark is discovered.
        }
        \label{fig:pbt-vs-iter}
    \end{minipage}
    \vspace{-3mm}
\end{figure*}

\subsection{Computation Framework: Population-Based or Iterative Learning? }
\label{sec:case:framework}

We first consider the simplest motivating example to intuitively illustrate the optimization challenges. Let's assume that $\pi_i$ is a scalar, $J(\pi_i)$ is linear in $\pi_i$, and $D(\pi_i,\pi_j)=\vert\pi_i-\pi_j\vert$. In our definition, where $M$ denotes the number of diverse policies, PBT involves $\Theta(M^2)$ constraints in a single linear programming problem while {ITR} involves $\gO(M)$ constraints in each of $M$ iterations.
{Given that the complexity of linear programming is a high-degree polynomial (higher than 2) of the number of constraints, solving PBT is harder (and probably slower) than solving ITR in a total of $M$ iterations, \emph{despite PBT being parallelized}.}
This challenge can be more severe in RL due to complex solution space and large training variance.

\begin{wrapfigure}{r}{0.5\textwidth}
\centering
    \vspace{-8mm}
    \includegraphics[width=0.5\textwidth]{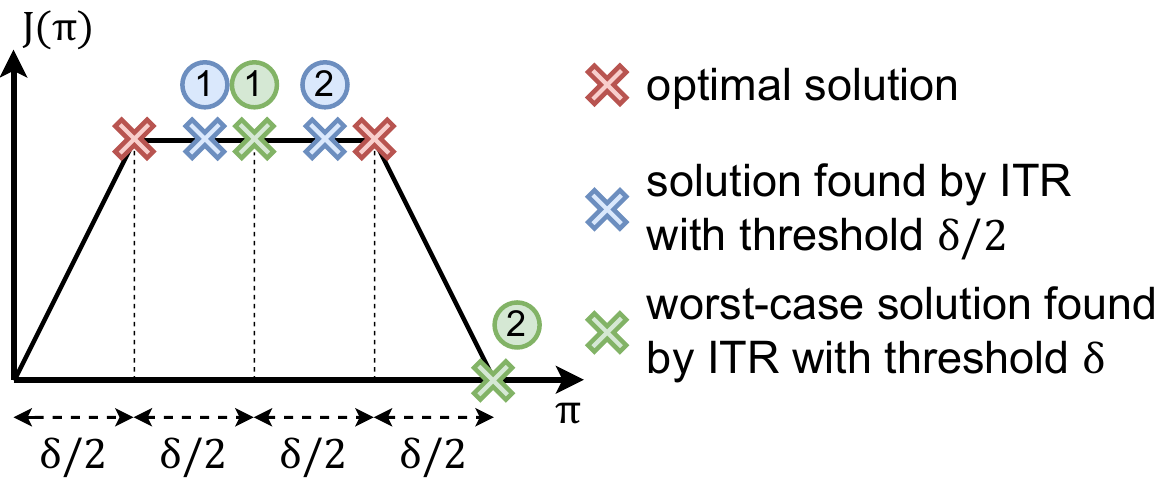}
    \caption{\small{1-D worst case of ITR. With threshold $\delta$,
    ITR finds solutions with inferior rewards. However, ITR can find optimal solutions
    if the threshold is halved.}}
    \label{fig:thm1wc}
    \vspace{-5mm}
\end{wrapfigure}
Although {ITR} can be optimized efficiently,
it remains unclear whether {ITR}, as a greedy approximation of PBT, can obtain solutions of comparable rewards.
Fig.~\ref{fig:thm1wc} shows the worst case in the 1-D setting when the {ITR} solutions (green)
can indeed have lower rewards than the PBT solution (red) subject to the same diversity constraint.
However, we will show in the next theorem that {ITR} is guaranteed to have no worse rewards than PBT by trading off half of the diversity.

\vspace{1mm}
\begin{theorem}
Assume $D$ is a distance metric. Denote the optimal value of Eq.(~\ref{eq:pbt-co}) as $T_1$.
Let $T_2=\sum_{i=1}^M J(\tilde{\pi}_i)$ where
\begin{footnotesize}
\begin{align}
\tilde{\pi}_i =\arg\max_{\pi_i}J(\pi_i)&\;\;
\textrm{s.t.} \;D(\pi_i,\tilde{\pi}_j)\ge\delta/2,\forall 1\le j< i
\label{eq:iter-thm1}
\end{align}
\end{footnotesize}
for $i=1,\dots,M$, then $T_2\ge T_1$.
\label{thm:2delta}
\end{theorem}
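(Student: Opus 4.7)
The plan is to exhibit a permutation $\sigma$ of $\{1,\dots,M\}$ such that, for every $i$, the PBT solution $\pi^\star_{\sigma(i)}$ is feasible at the $i$-th iteration of the relaxed ITR problem (Eq.~\ref{eq:iter-thm1}). Once such a matching is constructed, optimality of $\tilde\pi_i$ among policies satisfying its ITR constraints gives $J(\tilde\pi_i)\ge J(\pi^\star_{\sigma(i)})$, and summing over $i$ yields $T_2\ge T_1$ because $\sigma$ is a bijection.

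The core combinatorial input is a blocking lemma obtained from the triangle inequality for $D$. Define $F_i=\{k:D(\pi^\star_k,\tilde\pi_j)\ge\delta/2\text{ for all }j<i\}$, the set of PBT solutions that are feasible at ITR step $i$. I claim each previously produced $\tilde\pi_j$ can disqualify at most one PBT solution from $F_i$: if $D(\pi^\star_{k_1},\tilde\pi_j)<\delta/2$ and $D(\pi^\star_{k_2},\tilde\pi_j)<\delta/2$, the triangle inequality yields $D(\pi^\star_{k_1},\pi^\star_{k_2})<\delta$, contradicting the PBT constraint in Eq.~\ref{eq:pbt-co}. Hence $|F_i|\ge M-(i-1)$, and since adding constraints only shrinks the feasible set, $F_1\supseteq F_2\supseteq\cdots\supseteq F_M$.

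Given the nested structure and the size bound, I would construct $\sigma$ greedily from $i=M$ down to $i=1$: pick $\sigma(M)\in F_M$ arbitrarily, and at stage $i$ choose $\sigma(i)\in F_i\setminus\{\sigma(i+1),\dots,\sigma(M)\}$. Because all previously assigned indices lie in $F_{i+1}\subseteq F_i$, the removed set has exactly $M-i$ elements, while $|F_i|\ge M-i+1$, so one element remains. (Equivalently, Hall's condition is trivial here: for any $S\subseteq\{1,\dots,M\}$, $N(S)=F_{\min S}$ has size at least $M-\min S+1\ge |S|$.) Then $\pi^\star_{\sigma(i)}$ satisfies all constraints of the ITR problem at step $i$, so $J(\tilde\pi_i)\ge J(\pi^\star_{\sigma(i)})$, and
\begin{equation*}
T_2=\sum_{i=1}^M J(\tilde\pi_i)\ge\sum_{i=1}^M J(\pi^\star_{\sigma(i)})=\sum_{k=1}^M J(\pi^\star_k)=T_1.
\end{equation*}

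The main obstacle is the matching step rather than the triangle-inequality blocking bound. A naive forward greedy assignment can strand later iterations with only already-used feasible candidates, since $|F_i|$ decays linearly in $i$ while the number of previously assigned indices also grows linearly; this is exactly why I would process indices in reverse and exploit the nesting of the $F_i$'s, which guarantees that every element already matched to a later iteration still lies in the current $F_i$ and is therefore subtracted cleanly from the size bound. The only property of $D$ used throughout is the triangle inequality (plus nonnegativity), matching the "$D$ is a distance metric" hypothesis.
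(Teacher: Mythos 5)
Your proof is correct, and it takes a genuinely different route from the paper's, although both rest on the same triangle-inequality kernel: no single $\tilde\pi_j$ can lie within $\delta/2$ of two distinct PBT policies, since that would force those two within $\delta$ of each other and violate the PBT constraints. The paper deploys this by contradiction: it sorts both solution sets by decreasing value, takes the minimal prefix $N$ on which the PBT partial sum exceeds the ITR partial sum, deduces that each of the $N$ best PBT policies beats $\tilde\pi_N$ and must therefore violate one of the $N-1$ constraints defining $\tilde\pi_N$, and applies the pigeonhole principle to two PBT policies sharing a blocker. You run the same count in the contrapositive direction --- each $\tilde\pi_j$ disqualifies at most one PBT index, so $\lvert F_i\rvert \ge M-i+1$ --- and then convert it, via the nesting $F_1\supseteq\cdots\supseteq F_M$ and a backward greedy selection (equivalently Hall's theorem), into an explicit system of distinct representatives $\sigma$. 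Your version is constructive and proves the strictly stronger term-wise domination $J(\tilde\pi_i)\ge J(\pi^\star_{\sigma(i)})$ for a bijection $\sigma$, from which $T_2\ge T_1$ falls out by summation; the paper's version avoids any matching machinery and is somewhat leaner, at the cost of yielding only the aggregate inequality through a minimal-counterexample argument. Your remark that a forward greedy assignment can strand later iterations, and that processing indices in reverse is what makes the size bound $\lvert F_i\rvert\ge M-i+1$ usable against the $M-i$ already-assigned indices, identifies exactly the step where care is needed; the argument is complete as written.
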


Please see App.~\ref{app:proof1} for the proof.
The above theorem provides a quality guarantee for {ITR}. The proof can be intuitively explained by the 1-D example in Fig.~\ref{fig:thm1wc},
{where green points represent the worst case with threshold $\delta$ and blue points represent the solutions with threshold $\delta/2$.}
Thm.~\ref{thm:2delta} shows that, for any policy pool derived by PBT, we can always use {ITR} to obtain another policy pool, which has \emph{the same rewards} and \emph{comparable diversity scores}.

\begin{wraptable}{r}{4.5cm}
\vspace{-5mm}
\footnotesize
\centering
\caption{\small{The number of discovered landmarks across 6 seeds with standard deviation in the bracket.}}
\label{tab:pbt-vs-iter}
\vspace{1mm}
\begin{tabular}{ccc}
\toprule  
setting & PBT & {ITR} \\
\midrule
$N_L=4$ & 2.0 (1.0)& \textbf{3.5} (0.5) \\ 
$N_L=5$ & 2.2 (0.9)& \textbf{4.5} (0.5) \\  \bottomrule
\end{tabular}
\vspace{-3mm}
\end{wraptable}
\textbf{Empirical Results:}
We empirically compare PBT and {ITR} in a 2-D navigation environment with 1 agent and $N_L$ landmarks in Fig.~\ref{fig:pbt-vs-iter}.
The reward is 1 if the agent successfully navigates to a landmark and 0 otherwise.
We train $N_L$ policies using both PBT and {ITR} to discover strategies toward each of these landmarks.
More details can be found in App.~\ref{app:imple}.
Table~\ref{tab:pbt-vs-iter} shows the number of discovered landmarks by PBT and {ITR}.
{ITR} performs consistently better than PBT even in this simple example.
We intuitively illustrate the learning process of PBT and {ITR} in Fig.~\ref{fig:pbt-vs-iter}. {ITR}, due to its computation efficiency, can afford to run longer iterations and tolerate larger exploration noises. Hence, it can converge easily to diverse solutions by imposing a large diversity constraint. PBT, however, only converges when the exploration is faint, otherwise it diverges or converges too slowly.

\subsection{Practical Remark}

Based on the above analyses, we suggest ITR and diversity measures based on state distances be \emph{preferred} in RL applications.
We also acknowledge that, by the no-free-lunch theorem, they cannot be universal solutions and that trade-offs may still exist (see discussions in Sec.\ref{sec:conclusion} and App.\ref{app:discussion}).
Nonetheless, in the following sections, we will show that the effective implementation of these choices can lead to superior performances in various challenging benchmarks.
We hope that our approach will serve as a starting point and provide valuable insights into the development of increasingly powerful algorithms for potentially more challenging scenarios.

\section{Method}
\label{sec:method}

In this section, we develop a diversity-driven RL algorithm, \emph{State-based Intrinsic-reward Policy Optimization ({\name})}, by combining {ITR} and state-distance-based measures.
{\name} runs $M$ iterations to discover $M$ distinct policies.
At the $i$-th iteration, we solve equation~(\ref{eq:iter-co}) by converting it into unconstrained optimization using the Lagrange method. The unconstrained optimization can be written as:

\vspace{-7mm}
\begin{footnotesize}
\begin{align}
\min_{\pi_i}\,\max_{\lambda_j\ge 0,\,1\le j < i}\,  -J(\pi_i) - \sum_{j=1}^{i-1}
\lambda_j \left(D_\gS(\pi_i,\pi^\star_j) - \delta\right)
\label{eq:uco}
\end{align}
\end{footnotesize}
\vspace{-3mm}

$\lambda_j$ ($1\le j<i$) are Lagrange multipliers.
$\{\pi^\star_j\}_{j=1}^{i-1}$ are previously obtained policies.
We adopt two-timescale Gradient Descent Ascent (GDA)~\citep{lin2020gradient} to solve the above minimax optimization, i.e., performing gradient descent over $\pi_i$ and gradient ascent over $\lambda_j$ with different learning rates. In our algorithm, we additionally enforce the dual variables $\lambda_j$ to be bounded (i.e., in an interval $[0,\Lambda]$ for a large number $\Lambda$), which plays an important role both in the theoretical analysis and in empirical convergence.
However, $D_\gS(\pi_i,\pi_j^\star)$ cannot be directly optimized w.r.t. $\pi_i$ through gradient-based methods because it depends on the states traversed by $\pi$, rather than its output (e.g. actions).
Therefore, we cast $D_\gS(\pi_i,\pi_j^\star)$ as the cumulative sum of intrinsic rewards, specifically the intrinsic return. This allows us to leverage policy gradient techniques for optimization.
The pseudocode of {\name} can be found in App.~\ref{app:algorithmic}.

An important property of {\name} is the convergence guarantee.
We present an informal illustration in Thm.~\ref{thm:converge} and present the formal theorem with proof in App.~\ref{app:proof2}.
\begin{theorem}
    (Informal) Under continuity assumptions, {\name} converges to an $\epsilon$-stationary point.\label{thm:converge}
\end{theorem}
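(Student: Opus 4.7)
The plan is to cast the inner Lagrangian problem at iteration $i$ as a constrained nonconvex-concave minimax problem and then directly invoke the two-timescale GDA convergence result of \citet{lin2020gradient}. Writing $\theta$ for the parameters of $\pi_i$ and $\lambda = (\lambda_1,\dots,\lambda_{i-1}) \in [0,\Lambda]^{i-1}$, equation~(\ref{eq:uco}) becomes
\begin{equation*}
\min_{\theta}\,\max_{\lambda \in [0,\Lambda]^{i-1}}\, F(\theta,\lambda),\qquad F(\theta,\lambda) \;=\; -J(\pi_\theta) \;-\; \sum_{j=1}^{i-1}\lambda_j\bigl(D_\gS(\pi_\theta,\pi_j^\star)-\delta\bigr).
\end{equation*}
For every fixed $\theta$, $F$ is affine (hence concave) in $\lambda$, and the feasible set $[0,\Lambda]^{i-1}$ is compact, convex, and has diameter $\Lambda\sqrt{i-1}$. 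This is exactly the structural setting covered by the nonconvex-concave analysis of \citet{lin2020gradient}, so only the regularity conditions on $\theta$ remain to be checked.

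Next, I would establish the continuity/smoothness premises required by that analysis: (i) $J(\pi_\theta)$ and each $D_\gS(\pi_\theta,\pi_j^\star)$ are $L$-smooth (i.e., have Lipschitz gradients) in $\theta$; (ii) stochastic gradients with respect to $\theta$ are unbiased with bounded variance. Both can be justified under the same standard assumptions used in policy-gradient analyses: bounded rewards, a bounded finite horizon $H$, and a policy network whose log-likelihood $\log\pi_\theta(a\mid o)$ is $C^2$-smooth with uniformly bounded first and second derivatives. Recall that by construction $D_\gS(\pi_\theta,\pi_j^\star)$ is evaluated as the expected cumulative intrinsic return under $\pi_\theta$, so the same policy-gradient machinery (and the same smoothness constants, up to a factor depending on $H$ and the uniform bound on $g\circ d$) applies to it; compactness of $\lambda$ then transfers these bounds to the full objective $F$ with a smoothness constant that scales linearly in $\Lambda\cdot i$.

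With these ingredients in hand, I would apply Theorem~4.5 of \citet{lin2020gradient}: two-timescale GDA with suitably chosen step-size ratio $\eta_\theta/\eta_\lambda = \Theta(1/\kappa^2)$ produces, after $O(\epsilon^{-6})$ iterations, an iterate $\theta_T$ whose expected stationarity measure $\|\nabla\Phi(\theta_T)\|$ is at most $\epsilon$, where $\Phi(\theta)=\max_\lambda F(\theta,\lambda)$ is the primal function. Since $\Phi$ is precisely the (negative) regularized objective of {\name} at iteration $i$, this yields the claimed convergence to a neighborhood of an $\epsilon$-stationary point. The final formal statement would make explicit the dependence of the constants and iteration complexity on $\Lambda$, $i$, $H$, and the smoothness constants, and verify that the clipping of $\lambda_j$ to $[0,\Lambda]$ does not change the set of stationary points provided $\Lambda$ is chosen large enough relative to the unclipped Lagrangian's KKT multipliers (under Slater's condition on equation~(\ref{eq:iter-co})).

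The main obstacle I anticipate is justifying the smoothness of $D_\gS(\pi_\theta,\pi_j^\star)$ in $\theta$. Unlike $J(\pi_\theta)$, whose gradient smoothness is a textbook result, here the ``reward'' depends on a state-pair cost computed along trajectories, and rigor requires either (a) assuming $g\circ d$ is bounded and Lipschitz on the reachable state space, so that the intrinsic reward inherits the same regularity as an extrinsic reward, or (b) handling the case where $D_\gS$ is itself learned (as in {\namewd}), which introduces an auxiliary approximation error that must be absorbed into the ``neighborhood'' in the theorem's conclusion. I would state these as explicit hypotheses in the formal version in Appendix~\ref{app:proof2} rather than attempting to remove them.
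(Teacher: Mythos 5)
Your overall skeleton matches the paper's proof in Appendix~\ref{app:proof2}: both cast iteration $i$ as a nonconvex-concave minimax problem over $\pi$ and bounded multipliers $\bm{\lambda}\in[0,\Lambda]^{i-1}$, assume smoothness/Lipschitzness of the Lagrangian (the paper's Assumption~\ref{assump2} plays the role of your policy-gradient regularity argument), and then invoke the two-timescale GDA result of \citet{lin2020gradient} to get an $\epsilon$-stationary point in $\mathcal{O}(\epsilon^{-6})$ iterations. The one place where you genuinely diverge is the justification for truncating the dual variables. You argue that clipping to $[0,\Lambda]$ leaves the stationary points unchanged provided $\Lambda$ dominates the KKT multipliers under Slater's condition; but Eq.~(\ref{eq:iter-co}) is a nonconvex program, so Slater's condition does not by itself guarantee existence of bounded multipliers, and this step would need an additional, nontrivial constraint-qualification hypothesis. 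The paper avoids this entirely: its Lemmas~\ref{lem1}--\ref{lem3} show directly that the solution of the $\Lambda$-bounded problem has reward no smaller than that of the unbounded dual and violates each diversity constraint by at most $\epsilon_0=1/\Lambda$ (using only $0\le J\le1$), which is exactly the ``$\frac{1}{\Lambda}$-approximation without hurting optimality'' the theorem's remark refers to, and which quantifies the ``neighborhood'' in the statement without any constraint qualification. Two smaller points: since $F$ is only affine (not strongly concave) in $\lambda$, the applicable result is Theorem~4.8 of \citet{lin2020gradient} (the one the paper cites), not Theorem~4.5 --- the condition-number-based step-size ratio you mention belongs to the strongly concave case --- and in the merely-concave setting stationarity is measured through the Moreau envelope of $\Phi(\theta)=\max_{\lambda}F(\theta,\lambda)$ rather than $\|\nabla\Phi\|$, as $\Phi$ need not be differentiable.
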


\textbf{Remark:}
We assumed that the {return} $J$ and the distance $D_\gS$ are smooth in policies. In practice, this is true if (1) policy and state space are bounded and (2) reward function and system dynamics are continuous in the policy. (Continuous functions are bounded over compact spaces.) 
The key step is to analyze the role of the bounded dual variables $\lambda$, which achieves an $\frac{1}{\Lambda}$-approximation of constraint without hurting the optimality condition.

Instead of directly defining $D_\gS$, we define intrinsic rewards as illustrated in Sec.~\ref{sec:method}, such that $D_\gS(\pi_i,\pi_j^\star)=\E_{s_h\sim\mu_{\pi_i}}\left[\sum_{h=1}^H r_\textrm{int}(s_h;\pi_i,\pi^\star_j)\right]$.

\textbf{RBF Kernel:}
The most popular realization of Eq.~(\ref{eq:state-dist}) in machine learning is through kernel functions. Herein, we realize Eq.~(\ref{eq:state-dist}) as an RBF kernel on states.
Formally, the intrinsic reward is defined by
\begin{small}
\begin{align}
r^\textrm{RBF}_\textrm{int}(s_h;\pi_i,\pi^\star_j)=\frac{1}{H}\E_{s^\prime\sim \mu_{\pi^\star_j}}
\left[
-\exp\left(-\frac{\Vert s_h-s^\prime\Vert^2}{2\sigma^2}\right)
\right]
\end{align}
\end{small}
where $\sigma$ is a hyperparameter controlling the variance.

\textbf{Wasserstein Distance:}
For stronger discrimination power, we can also realize Eq.~(\ref{eq:state-dist}) as $L_2$-Wasserstein distance. According to the dual form~\citep{villani2009optimaltransport}, we define
\begin{small}
\begin{align}
r^\textrm{WD}_\textrm{int}(s_h;\pi_i,\pi_j^\star)=\frac{1}{H}\sup_{\Vert f\Vert_L\le 1}
 f(s_h)
-\E_{s^\prime\sim \mu_{\pi^\star_j}}\left[ f(s^\prime) \right]\label{eq:ir-wd}
\end{align}
\end{small}
where $f:\gS\to\R$ is a $1$-Lipschitz function.
This realization holds a distinct advantage due to its interpretation within optimal transport theory~\cite{villani2009optimaltransport,wgan}. Unlike distances that rely solely on specific summary statistics such as means, Wasserstein distance can effectively quantify shifts in state distributions and remains robust in the presence of outliers~\cite{villani2009optimaltransport}.
We implement $f$ as a neural network and clip parameters to $[-0.01,0.01]$ to ensure the Lipschitz constraint. {Note that $r^\textrm{WD}_\textrm{int}$ incorporates representation learning by utilizing} a learnable scoring function $f$ and is more flexible in practice. {We also show in App.~\ref{app:additional-ablation} that $r^\textrm{WD}_\textrm{int}$ is robust to different inputs, including states with random noises and RGB images.}

We name {\name} with $r^\textrm{RBF}_\textrm{int}$ and $r^\textrm{WD}_\textrm{int}$ \textbf{\emph{{\namerbf}}} and \textbf{\emph{{\namewd}}} respectively.

\textbf{Implementation:} To incorporate temporal information, we stack the recent 4 global states to compute intrinsic rewards and normalize the intrinsic rewards to stabilize training. 
In multi-agent environments, we learn an agent-ID-conditioned policy~\citep{fu2022revisiting} and share the parameter across all agents.
Our implementation is based on MAPPO~\citep{yu2021surprising} with more details in App.~\ref{app:imple}.

\section{Experiments}
\label{sec:exp}

We evaluate {\name} across three domains that exhibit multi-modality of solutions. The first domain is the humanoid locomotion task in Isaac Gym~\citep{makoviychuk2021isaac}, where diversity can be quantitatively assessed by well-defined behavior descriptors.
{We remark that the issues we addressed in Sec.~\ref{sec:case:measure} may not be present in this task where the action space is small and actions are highly correlated with states.
Further, we examine the effectiveness of {\name} in two much more challenging multi-agent domains}, StarCarft Multi-Agent Challenge (SMAC)~\citep{samvelyan2019starcraft} and Google Research Football (GRF)~\citep{kurach2020google}, where well-defined behavior descriptors are not available and existing diversity measures may produce misleading diversity scores. We provide introductions to these environments in App.~\ref{app:env}.

First, we show that {\name} can efficiently learn diverse strategies and outperform several baseline methods, including DIPG~\cite{masood2019diversity}, SMERL~\citep{kumar_one_2020}, DvD~\cite{parker-holder_effective_2020},
and RSPO~\cite{zhou_continuously_2022}.
Then, we qualitatively demonstrate the emergent behaviors learned by {\name}, which are both \emph{visually distinguishable} and \emph{human-interpretable}.
Finally, we perform an ablation study over the building components of {\name} and show that both the diversity measure, ITR, and GDA are critical to the performance.

All algorithms run for the same number of environment frames on a desktop machine with an RTX3090 GPU. Numbers are average values over 5 seeds in Humanoid and SMAC and 3 seeds in GRF with standard deviation shown in brackets.
More algorithm details can be found in App.~\ref{app:imple}.
Additional visualization results can be found on our project website (see App.~\ref{app:web}).

\subsection{Comparison with Baseline Methods}

\begin{wraptable}{r}{5.0cm}
\scriptsize
\centering
\vspace{-4mm}
\caption{\small{Pairwise distance of joint torques (i.e., diversity scores) in the humanoid locomotion task.}}
\label{tab:humanoid-diversity-score}
\begin{tabular}{cccccc}
\toprule  
 {\namerbf} & {\namewd} & RSPO \\
\midrule
0.53(0.17) & \textbf{0.71(0.23)} & 0.53(0.05) \\ 
\bottomrule
\toprule  
DIPG & DvD & SMERL \\
\midrule
0.12(0.04) & 0.40(0.22)& 0.01(0.00) \\
\bottomrule
\end{tabular}
\vspace{-2mm}
\end{wraptable}
\textbf{Humanoid Locomotion.}
Following \citet{zhou_continuously_2022}, we train a population of size $4$. We assess diversity by the pairwise distance of joint torques, a widely used behavior descriptor in recent Quality-Diversity works~\cite{wuquality}. Torque states are not included as the input of diversity measures and we only use them for evaluation to ensure a fair comparison. Results are shown in Table~\ref{tab:humanoid-diversity-score}. We can see that both variants of SIPO can outperform all baseline methods except that SIPO-RBF achieves comparable performance with RSPO, even if RSPO explicitly encourages the output of different actions/forces.

\begin{wraptable}{r}{5.6cm}
\scriptsize
\centering
\vspace{-4mm}
\caption{\small{State entropy estimated by $k$-nearest-neighbor in SMAC. ($k=12$)}}
\label{tab:state-entropy-smac}
\begin{tabular}{ccc}
\toprule
           &         \emph{2m\_vs\_1z} &       \emph{2c\_vs\_64zg} \\
\midrule
\namerbf & \textbf{0.038(0.002)} & \textbf{0.072(0.003)} \\
   \namewd & 0.036(0.001) & 0.056(0.003) \\
      RSPO & 0.032(0.003) & 0.070(0.001) \\
      DIPG & 0.032(0.002) & 0.056(0.004) \\
     SMERL & 0.028(0.002) & 0.042(0.002) \\
       DvD & 0.030(0.002) & 0.057(0.003) \\
\bottomrule
\end{tabular}
\end{wraptable}
\textbf{SMAC}
Following \citet{zhou_continuously_2022}, we run {\name} and all baselines on an easy map, \emph{2m\_vs\_1z}, and a hard map, \emph{2c\_vs\_64zg}, both across 4 iterations.
We merge all trajectories produced by the policy collection and incorporate a $k$-nearest-neighbor state entropy estimation \cite{singh2003nearest} to assess diversity.
Intuitively, a more diverse population should have a larger state entropy value.
We set $k=12$ following \citet{liu2021behavior} and show results in Table~\ref{tab:state-entropy-smac}.
On these maps, two agents are both involved in the attack.
Therefore, RSPO, which incorporates an action-based cross-entropy measure, can perform well across all baselines.
However, {\name} explicitly compares the distance between resulting trajectories and can even outperform RSPO, leading to the most diverse population.

\textbf{GRF} We consider three academy scenarios, specifically \emph{3v1}, \emph{counterattack} (\emph{CA}), and \emph{corner}.
The GRF environment is more challenging than SMAC due to the large action space, more agents, and the existence of duplicate actions.
We determine a population size $M=4$ by balancing resources and wall-clock time across different baselines.
Table~\ref{tab:fb-baseline} compares the number of distinct policies (in terms of ball-passing routes, see App.~\ref{app:evaluation-protocol}) discovered in the population.
Due to the strong adversarial power of our diversity measures and the application of GDA, SIPO is the most efficient and robust --- even in the challenging 11-vs-11 \emph{corner} and \emph{CA} scenario, {\name} can effectively discover different winning strategies in just a few iterations across different seeds.
By contrast, baselines suffer from learning instability in these challenging environments and tend to discover policies with slight distinctions.
We also calculate the estimated state entropy as we did in SMAC. However, we find that this metric cannot distinguish fine-grained ball-passing behaviors in GRF (check our discussions in App.~\ref{app:addtional}).

\textbf{Remark:}
In GRF experiments, when $M$ is small, even repeated training with different random seeds (PG) is a strong baseline (see Table~\ref{tab:fb-baseline}). Hence, the numbers are actually restricted in a small interval (with a lower bound equal to PG results and an upper bound equal to $M=4$), which makes the improvements by SIPO seemingly less significant. However, achieving clear improvements in these challenging applications remains particularly non-trivial.
With a population size $M=10$, SIPO clearly outperforms baselines by consistently discovering one or more additional strategies.

\begin{table*}[b]
\vspace{-4mm}
\centering
\begin{threeparttable}
\centering
\caption{Number of distinct strategies in GRF discovered by different methods in terms of the ball-passing route.
Details of the evaluation protocol can be found in App.~\ref{app:evaluation-protocol}.}
\scriptsize
\begin{tabular}{ccccccccccc}
\toprule
 &\multirow{2}{*}{Population Size $M$} &  \multicolumn{2}{c}{ours} & \multicolumn{4}{c}{baselines}& {random}\\
 \cmidrule(lr){3-4}\cmidrule(lr){5-8}\cmidrule(lr){9-9}
 & & {\namerbf} & {\namewd} & DIPG & SMERL & DvD\tnote{1} & RSPO & {PG}\\
 \midrule
\emph{3v1}    &4&  \textbf{3.0 (0.8)}  & \textbf{3.0 (0.0)} & 2.7 (0.5) & 1.3 (0.5)  & \textbf{3.0 (0.8)} & 2.0 (0.0) & {2.7 (0.5)}\\
\emph{CA}       &4& \textbf{3.3 (0.5)} & 3.0 (0.8) & 2.3 (0.5) & 1.3 (0.5) & - & 2.0 (0.0) & {1.7 (0.5)}\\
\emph{corner}   &4& 2.7 (0.5) & \textbf{3.0 (0.8)} & 1.7 (0.5) & 1.0 (0.0) & - & 1.6 (0.5) & {2.0 (0.8)}\\
\emph{3v1}    &10&  4.3 (0.5) & \textbf{5.7 (0.5)} & 3.7 (0.5) & - & - & 2.3 (0.5) & - \\
\bottomrule
\end{tabular}
\label{tab:fb-baseline}
\begin{tablenotes}\scriptsize
\item[1] Training DvD in \emph{CA} and \emph{corner} or with $M=10$ requires $>$24GB GPU memory, which exceeds our memory limit.
\end{tablenotes}
\end{threeparttable}
\vspace{-3mm}
\end{table*}

\subsection{Qualitative Analysis}
\label{sec:main-result}

\begin{figure*}[t]
    \centering
    \includegraphics[width=0.95\textwidth]{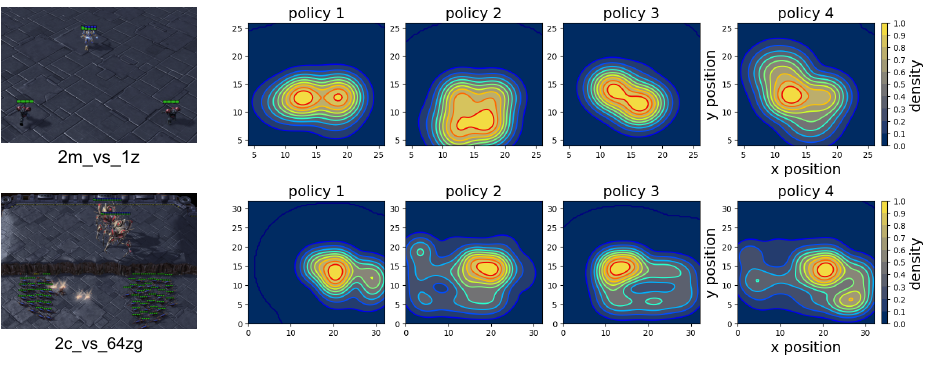}
    \caption{Heatmaps of agent positions in SMAC across 4 iterations with {\namerbf}.}
    \label{fig:smac-heatmap}
\end{figure*}

For SMAC, we present heatmaps of agent positions in Fig.~\ref{fig:smac-heatmap}.
The heatmaps clearly show that {\name} can consistently learn novel winning strategies to conquer the enemy.
Fig.~\ref{fig:3v1} presents the learned behavior by {\name} in the GRF \emph{3v1} scenario of seed 1.
We can observe that agents have learned a wide spectrum of collaboration strategies across merely 7 iterations.
The strategies discovered by {\name} are both \emph{diverse} and \emph{human-interpretable}.
In the first iteration, all agents are involved in the attack such that they can distract the defender and obtain a high win rate.
The 2nd and the 6th iterations demonstrate an efficient pass-and-shoot strategy, where agents quickly elude the defender and score a goal.
In the 3rd and the 7th iterations, agents learn smart ``one-two'' strategies to bypass the defender, a prevalent tactic employed by human football players. {We note that \emph{NONE} of the baselines have ever discovered this strategy across all runs, while SIPO is consistently able to derive such strategies for all random seeds.}
Visualization results in \emph{CA} and \emph{corner} scenarios can be found in App.~\ref{app:addtional}.

\begin{figure*}[t]
    \centering
    \includegraphics[width=0.95\textwidth]{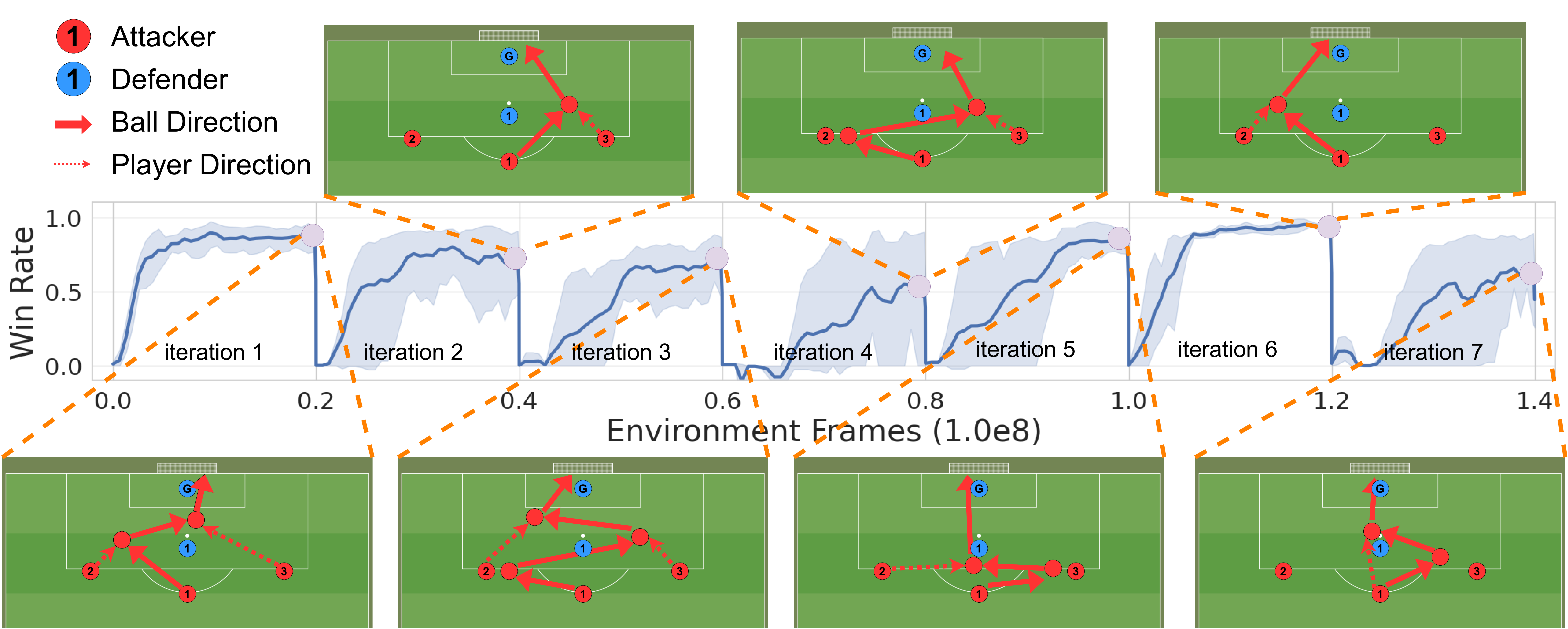}
    \caption{Learning curves and discovered strategies
    by {\namewd} in the \emph{3v1} scenario over 7 iterations.
    Strategies of seed 1 are shown.}
    \label{fig:3v1}
\end{figure*}

\subsection{Ablation Study}

We apply these changes to {\namewd}:
\vspace{-1.5mm}
\begin{footnotesize}
\begin{itemize}
    \item \textit{fix-L}: Fixing the multiplier $\lambda_i$ instead of applying GDA.
    \item \textit{CE}: The intrinsic reward is replaced with cross-entropy, i.e., $r_{\textrm{int}}^{\textrm{CE}}(s_h,a_h)=-\log \pi_j^\star (a_h \mid s_h)$, where $\pi_j^\star$ denotes a previously discovered policy. Additionally, GDA is still applied.
    \item \textit{filter}: Optimizing the extrinsic rewards on trajectories that have intrinsic returns exceeding $\delta$ and optimizing intrinsic rewards defined by Eq.~(\ref{eq:ir-wd}) for other trajectories~\cite{zhou_continuously_2022}.
    \item \textit{PBT}: Simultaneously training $M$ policies with $M(M-1)/2$ constraints (i.e., directly solving Eq.~(\ref{eq:pbt-co})) with intrinsic rewards defined by Eq.~(\ref{eq:ir-wd}) and GDA.
\end{itemize}
\end{footnotesize}
\vspace{-1.5mm}

\begin{wraptable}{r}{8.5cm}
\centering
\begin{threeparttable}
\scriptsize
\centering
\caption{\small{\# distinct strategies of ablations in GRF.}}
\begin{tabular}{ccccccccccc}
\toprule
  & ours & fix-L & CE & filter & {PBT} \\
 \midrule
\emph{3v1}      & \textbf{3.0 (0.0)} & 1.0 (0.0) &2.7 (0.5) & 1.3 (0.5) & {2.7 (0.5)} \\
\emph{CA}       & \textbf{3.0 (0.8)} & -\tnote{1} &  2.3 (0.8) & 1.0 (0.0) & -\tnote{2}\\
\emph{corner}    & \textbf{3.0 (0.8)} & -\tnote{1} & 1.7 (0.5) & 1.0 (0.0) & -\tnote{2}\\
\bottomrule
\label{tab:fb-ablation}
\end{tabular}
\vspace{-3mm}
\begin{tablenotes}
    \item[1] \scriptsize{Not converged}.
    \item[2] \scriptsize{Training requires $>$24GB memory and exceeds our memory limit}.
\end{tablenotes}
\end{threeparttable}
\end{wraptable}
We report the number of visually distinct policies discovered by these methods in Table~\ref{tab:fb-ablation}.
Comparison between {\name} and CE demonstrates that the action-based cross-entropy measure may suffer from duplicate actions in GRF and produce nearly identical behavior by overly exploiting duplicate actions, especially in the \emph{CA} and \emph{corner} scenarios with 11 agents.
Besides, the fixed Lagrange coefficient, the filtering-based method, {and PBT} are all detrimental to our algorithm.
These methods also suffer from significant training instability.
Overall, the state-distance-based diversity measure, ITR, and GDA are all critical to the performance of {\name}.

\vspace{-1mm}
\section{Conclusion}
\label{sec:conclusion}
\vspace{-1mm}
We tackle the problem of discovering diverse high-reward policies in RL.
First, we demonstrate concrete failure cases of existing diversity measures and propose a novel measure that explicitly compares the distance in state space.
Next, we present a thorough comparison between PBT and ITR and show that {ITR} is much easier to optimize and can derive solutions with comparable quality to PBT.
Motivated by these insights, we combine {ITR} with a state-distance-based diversity measure to develop {\name}, which has provable convergence and can efficiently discover a wide spectrum of human-interpretable strategies in a wide range of environments.

\textbf{Limitations:}
First, we assume direct access to an object-centric state representation.
When such a representation is not available (e.g., image-based observations), representation learning becomes necessary and algorithm performance can be affected by the quality of the learned representations.
Second, because {ITR} requires sequential training, {the wall clock time of {\name} can be longer than the PBT alternatives when fixing the total number of training samples.}
The acceleration of ITR remains an open challenge.

\textbf{Future Directions:}
Besides addressing the above limitations, we suggest three additional future directions based on our paper.
First, a consensus on the best algorithmic formulation of distinct solutions in RL remains elusive. It is imperative to understand diversity in a more theoretical manner.
Second, while this paper focuses on single-agent and cooperative multi-agent domains, extending SIPO to multi-agent competitive games holds great potential.
Finally, although SIPO/ITR enables open-ended training, it is worth studying how to determine the optimal population size to better balance resources and the diversity of the resulting population.

\section*{Acknowledgement}

This project is partially supported by 2030 Innovation Megaprojects of China (Programme on New Generation Artificial Intelligence) Grant No. 2021AAA0150000.

\bibliography{main}
\bibliographystyle{plainnat}

\newpage
\appendix
\section{Project Website}
\label{app:web}
Check \url{https://sites.google.com/view/diversity-sipo} for GIF demonstrations.

\section{Additional Results}
\label{app:addtional}

\subsection{More Qualitative Results}

\begin{figure}[hb]
    \centering
    \includegraphics[width=\textwidth]{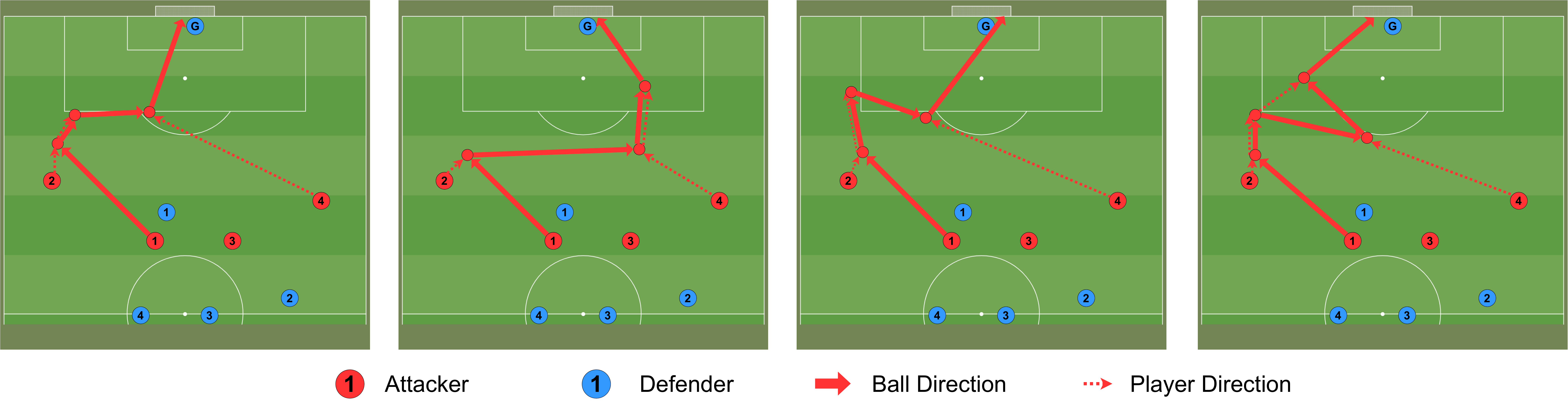}
    \caption{Visualization of learned behaviors in GRF \emph{CA} across a single training trial.}
    \label{fig:ca}
\end{figure}

\begin{figure}[hb]
    \centering
    \includegraphics[width=0.8\textwidth]{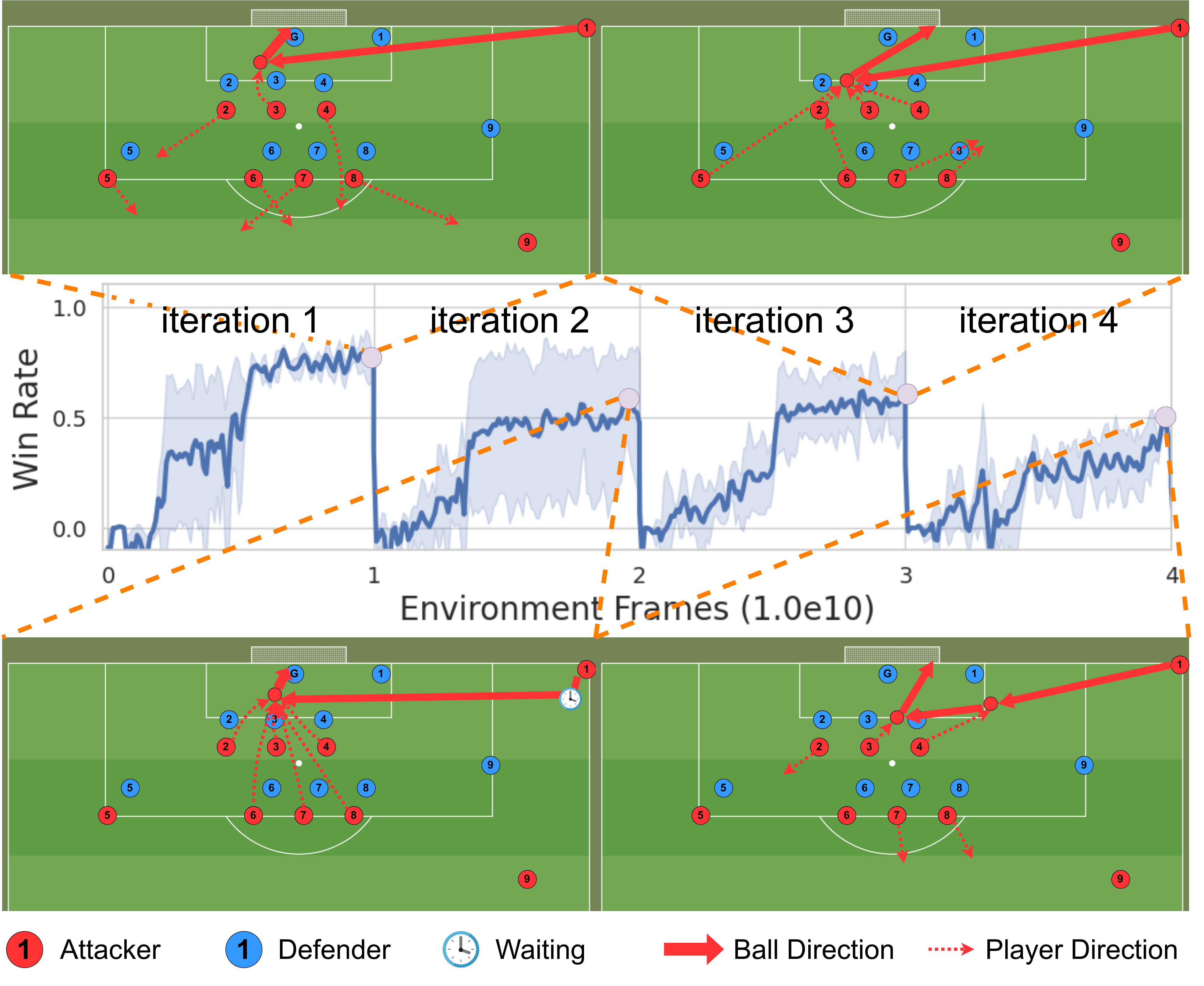}
    \caption{Visualization of learned behaviors in GRF \emph{corner}.}
    \label{fig:corner}
\end{figure}

\begin{figure}[hb]
    \centering
    \includegraphics[width=\textwidth]{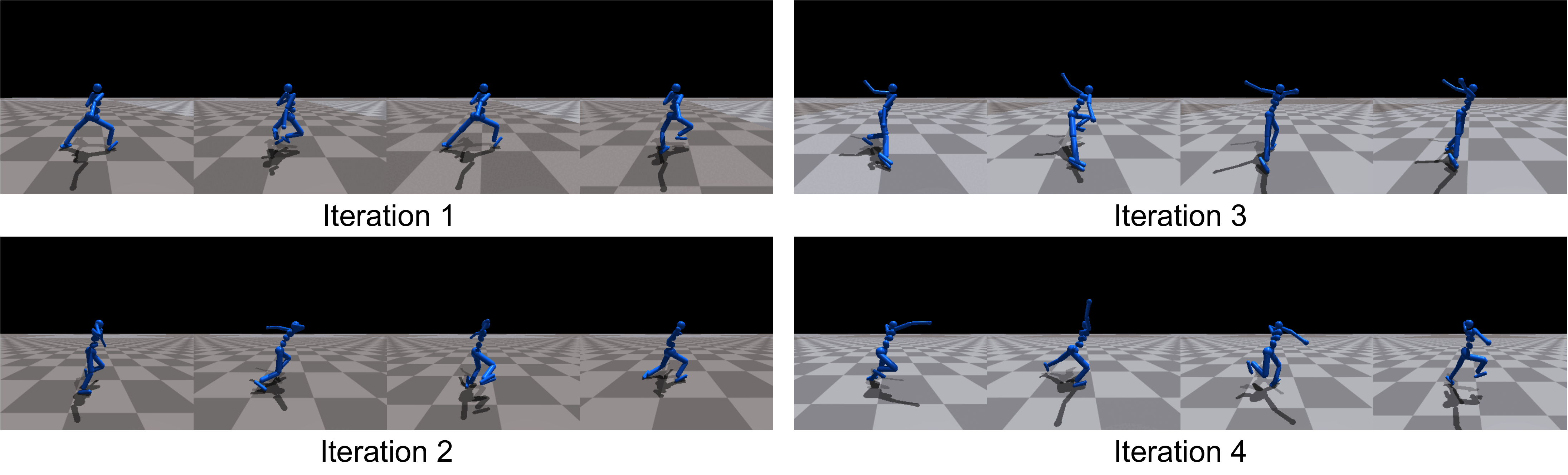}
    \caption{Visualization of learned behaviors in Humanoid.}
    \label{fig:state-humanoid}
\end{figure}

We show additional visualization results in Fig.~\ref{fig:ca}, Fig.~\ref{fig:corner}, and Fig.~\ref{fig:state-humanoid}.
Corresponding GIF visualizations can be found on our project website.

\begin{table}[ht]
\centering
\begin{threeparttable}
\centering
\caption{$k$-nearest neighbor state entropy estimation in GRF. Population size $M=4$.}
\label{tab:state-dvd-fb}
\scriptsize
\begin{tabular}{cccccccc}
\toprule
 &  \multicolumn{2}{c}{ours} & \multicolumn{4}{c}{baselines}\\
 \cmidrule(lr){2-3}\cmidrule(lr){4-7}
 & {\namerbf} & {\namewd} & DIPG & SMERL\tnote{1} & DvD\tnote{2} & RSPO\tnote{1} & PG (random seeding)\\
\midrule
   \emph{3v1} & 0.009(0.000) & 0.012(0.000) & 0.010(0.001) & 0.011(0.002) & 0.010(0.000) & 0.011(0.001) & 0.009(0.001) \\
    \emph{CA} & 0.037(0.000) & 0.031(0.006) & 0.036(0.002) &            - &            - & 0.034(0.001) & 0.039(0.001)\\
\emph{Corner} & 0.028(0.001) & 0.031(0.001) & 0.030(0.002) &            - &            - &            - & 0.028(0.002) \\
\bottomrule
\end{tabular}
\begin{tablenotes}\footnotesize
\item[1] The learned policy in some iterations cannot even collect a single winning trajectory, so we are unable to compute their diversity score.
\item[2] Training DvD in \emph{CA} and \emph{corner} requires $>$24GB GPU memory, which exceeds our memory limit.
\end{tablenotes}
\end{threeparttable}
\end{table}

\subsection{Task Performance Evaluation}

The evaluation win rates of the demonstrated visualization results in SMAC and GRF are shown in Table~\ref{tab:eval-win-rate}.
Evaluated episode returns in Humanoid are shown in Table~\ref{tab:eval-humanoid-ret}.

\cmrd{We also present the diversity score and average rewards achieved by baselines in Table~\ref{tab:baseline-reward}. These numerical values are averaged across the entire population for a clear comparison. The tabulated data highlights the varying trade-offs between task performance and diversity exhibited by different algorithms. It is noteworthy that SIPO, in particular, displays an adeptness at training a notably more diverse population while upholding a reasonably moderate level of task performance.}

\begin{table}[ht]
    \centering
    \caption{Evaluation win rate (\%) of the demonstrated visualization results in SMAC and GRF.}
    \label{tab:eval-win-rate}
    \begin{tabular}{c cc ccc}\toprule
         & \multicolumn{2}{c}{SMAC} & \multicolumn{3}{c}{GRF}\\
         \cmidrule(lr){2-3}\cmidrule(lr){4-6}
         & \emph{2m1z}& \emph{2c64zg}   &\emph{3v1} & \emph{CA} & \emph{corner} \\
         \midrule
         $\pi_1$ &   100.0(0.0)     &   98.1(2.1)    &   92.3(6.2)    &   48.2(10.4)    &     78.2(16.2)     \\
         $\pi_2$ &   99.6(0.9)      &  100.0(0.0)    &   82.1(8.4)    &   43.8(42.2)    &     57.0(37.7)      \\
         $\pi_3$ &   100.0(0.0)     &   96.9(3.3)    &   90.7(1.1)    &   54.7(30.6)   &     55.7(20.8)     \\
         $\pi_4$ &   99.6(0.6)      &   98.6(2.4)    &   63.6(45.0)    &   17.2(30.0)    &    30.7(29.0)        \\
         $\pi_5$ &   -      &   -    &  85.4(9.1)     &   -   &     -       \\
         $\pi_6$ &   -      &   -    &  93.2(1.9)     &   -   &     -       \\
         $\pi_7$ &   -      &   -    &  64.6(32.5)     &   -   &     -       \\
         \bottomrule
    \end{tabular}
\end{table}

\begin{table}[ht]
    \centering
    \caption{Episode returns in Humanoid.}
    \label{tab:eval-humanoid-ret}
    \begin{tabular}{c ccc}\toprule
         & SIPO-RBF & SIPO-WD & SIPO-WD (visual) \\
         \midrule
         $\pi_1$ &   4863.9(970.3)     &   3909.4(533.4)    &  4761.3(107.8)  \\
         $\pi_2$ &   3746.5(488.0)     &  3784.2(481.2)     &   4349.3(169.0)     \\
         $\pi_3$ &   3092.0(805.0)     &   3770.4(674.4)    &   4724.3(946.5)    \\
         $\pi_4$ &   2332.8(519.8)      &   3589.6(387.4)    &   3819.7(588.7)    \\
         \bottomrule
    \end{tabular}
\end{table}

\begin{table}[ht]
    \centering
    \caption{Reward/diversity of all baselines. The reward metric in SMAC and GRF are evaluation win rate (\%). The evaluation metrics of diversity used in humanoid, SMAC, GRF are the joint torque distance, state entropy (1e-3), and the number of different ball-passing routes, respectively.
    It is noteworthy that SIPO, in particular, displays an adeptness at training a notably more diverse population while upholding a reasonably moderate level of task performance.}
    \scriptsize
    \begin{tabular}{c|ccccccc}
    \toprule
        Task/Scenario & SIPO-RBF & SIPO-WD & DIPG & RSPO & SMERL & DvD & PPO \\
        \midrule
        humanoid 	&	3508 / 0.53 & 	3763 / 0.71 	& 5191 / 0.12	 & 1455 / 0.53   &4253 / 0.01 & 4498 / 0.40 & 5299 / - \\
        SMAC 2m1z  & 100 / 38 	& 	100 / 36	& 100 / 32	& 100 / 32	& 100 / 28 & 100 / 30 & 100 / - \\
        SMAC 2c64zg & 	99 / 72	& 	93 / 56 	& 99 / 70	& 85 / 56	& 100 / 42 & 100 / 57 & 100 / - \\
        GRF 3v1 (first 4) & 	93 / 3.0	& 	 82 / 3.0	& 93 / 2.7	&  94 / 2.0	&   91 / 1.3    &   83 / 3.0  &    92 / 2.7  \\
        GRF CA 	& 	70 / 3.3	& 	41 / 3.0	& 46 / 2.3	&  76 / 2.0	&    45 / 1.3    &   -    &   50 / 1.7   \\
        GRF Corner 	& 	72 / 2.7	& 	56 / 3.0	& 75 / 1.7	&  23 / 1.6	&    67 / 1.0    &    -   &   71 / 2.0    \\
        \bottomrule
    \end{tabular}
    \label{tab:baseline-reward}
\end{table}

\subsection{Evaluation Metric and Protocol for Diversity}
\label{app:evaluation-protocol}

\subsubsection{Humanoid}

The Humanoid locomotion task is well-studied in the Quality-Diversity (QD) community, enabling the application of well-defined behavior descriptors (BD) to assess diversity scores. While domain-agnostic metrics like DvD scores can also be applied, we consider domain-specific BDs to be more appropriate and accurate for evaluation in this setting.

\subsubsection{SMAC}

Complex multi-agent tasks like SMAC lack well-defined BDs. Hence, domain-agnostic diversity measures such as the state-entropy measure should be applied. Moreover, different SMAC winning strategies tend to visit different areas of the map, which can be usually captured by the state-entropy measure.

\subsubsection{GRF}
In our initial study of the GRF task, diversity was evaluated using the $k$-nearest-neighbor state entropy estimation as in SMAC (see Table~\ref{tab:state-dvd-fb}). However, we observed a significant difference between the computed scores and visualized behaviors. Further investigation revealed that state entropy can sometimes report fake diversity in GRF.
For example, the ball-moving route is highly fine-grained between nearby players in the counter-attack (CA) scenario, and additional passes may not change the state entropy significantly. Instead, agents' positions play a crucial role in this scenario, where different shooting positions can introduce substantial state variance and lead to a higher entropy score.
As an example, readers can refer to the replays of \href{https://mega.nz/folder/4Sth3AyJ#m41lG_Yfya9h30KWR98A0A}{{\namerbf} (4 iterations of seed 2)} and \href{https://mega.nz/folder/gbl1yCQC#ElQCgVdi-Xwnb1X-cTtKOA}{PG (seed 2, 1002, 2002, and 3002)}, where SIPO-RBF discovers four distinct passing strategies, while PG keeps passing the ball to the same player. Nevertheless, the state entropy of PG (0.0397) is higher than that of SIPO-RBF (0.0378).

Hence, we counted the number of distinct policies according to their ball-passing routes, such as passing the ball to different players or shooting with different players, to evaluate diversity in GRF. To quantify these differences, we extracted the positions of the ball and the players in the field and calculated the nearest ally player ID to the ball across a winning episode. We then removed timesteps where the nearest distance was above a pre-defined threshold of $0.03$. Typically, these timesteps correspond to instances when the ball is being transferred among players, making the nearest player ID irrelevant. Next, we removed consecutive duplicate player IDs from the resulting sequence to obtain a concise and informative embedding of the ball passing route. By comparing the lengths of their respective embeddings and verifying that the player IDs in each embedding are identical, we determined whether the two policies exhibit similar behavior.

We acknowledge that existing diversity measures may not be applicable in GRF, and hence we opted for this novel approach to evaluate diversity. Additionally, we experimented with using raw observations, which include ball ownership information provided by the game engine, but found it to be highly inaccurate based on our visualization.

\subsection{{Additional Ablation Studies}}
\label{app:additional-ablation}

\subsubsection{Input to the Diversity Measure}
\label{app:ablation-state-input}

\paragraph{Vectorized States in Google Research Football}
We perform an additional ablation study over the input of our diversity measure in GRF \emph{3v1} scenario with {\namewd}. We consider the following kinds of state input besides the default state input we adopted in Sec.~\ref{sec:exp}:
\begin{itemize}
    \item full observation (named \emph{full}, $115$ dims);
    \item default state input with random noises of the same dimension (named \emph{random}, $36$ dims).
\end{itemize}
The numbers of visually distinct strategies are listed in Table~\ref{tab: state subsets}. The performance of \emph{full} and \emph{random} is similarly good. The result implies that the learnable discriminator can automatically filter out irrelevant states to some extent, and that {\namewd} performs relatively robust w.r.t. different state input of the diversity measure.
\begin{table}[ht]
    \centering
    \caption{State input ablation. The table shows the number of distinct strategies in GRF \emph{3v1}.}
    \label{tab: state subsets}
    \begin{tabular}{cccc}
    \toprule
         & {\namewd} & full &random \\
         \midrule
         \emph{3v1} &   3.0 (0.0) & 3.0 (0.8)       &   3.0 (0.0)    \\
         \bottomrule
    \end{tabular}
\end{table}

\paragraph{RGB Images in Locomotion Tasks}
We run {\namewd} in the visual Humanoid task based on Isaac Gym~\citep{makoviychuk2021isaac}. The training protocol is similar to the state-only version (i.e., the input of policy and intrinsic rewards are both locomotion states of the Humanoid) except that we stack recent 4 RGB camera observations ($84\times84$) as the input of intrinsic rewards in Eq.~\ref{eq:ir-wd}. We adopt the training code developed in Isaac Gym and the default PPO configuration. The backbone of the discriminator is composed of 4 convolutional layers with kernel size 3, stride 2, padding 1, and [16, 32, 32, 32] channels. Then the feature is passed to an MLP with 1 hidden layer and 256 hidden units. The activation function is leaky ReLU with slope $0.2$.
We also compute the pairwise distance of joint torques as in the state-only version and show the result in Table~\ref{tab:humanoid-visual}. Visualizations are shown in Fig.~\ref{fig:visual-humanoid}. {\namewd} can also learn meaningful diverse behaviors with RGB images as the state input thanks to the learnable Wasserstein discriminator. This implies that our algorithm can be naturally extended to high-dimensional states and incorporated with advances in representation learning, which may be a potential future direction.

\begin{figure}[b]
    \centering
    \includegraphics[width=\textwidth]{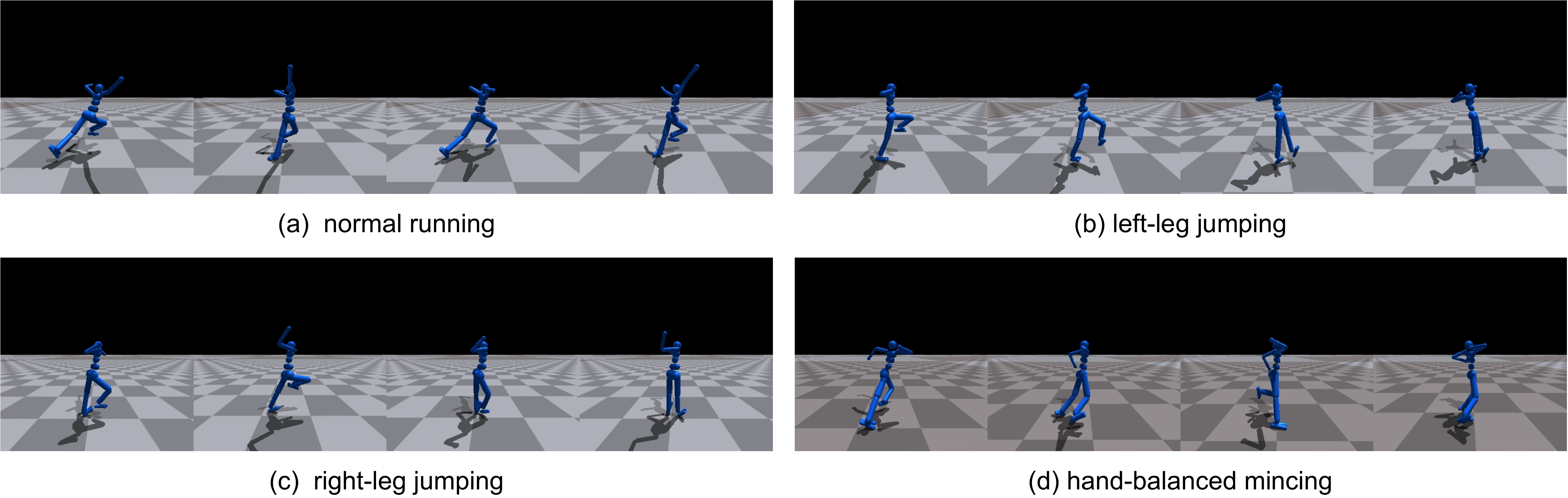}
    \caption{Results of {\namewd} in the visual Humanoid task.}
    \label{fig:visual-humanoid}
\end{figure}

\begin{table}[t]
    \centering
    \caption{Pairwise distance of joint torques (i.e., diversity score) in Humanoid with visual input. Results in visual experiments are averaged over 3 seeds.}
    \label{tab:humanoid-visual}
    \begin{tabular}{c cc ccc}
    \toprule
         {\namewd} (visual) & {\namewd} & RSPO (best baseline) \\
         \midrule
         0.62 (0.26) &   0.71 (0.23) & 0.53 (0.05)  \\
         \bottomrule
    \end{tabular}
\end{table}

\subsubsection{Combining State- and Action-based Diversity Measures}

\cmrd{Based on SIPO-RBF, we introduce additional action information by directly concatenating the global state, used for diversity calculation, with the one-hot encoded actions of all agents within the GRF domain. Table~\ref{tab:state-action-combine} presents the outcomes, indicating the number of policies obtained. For scenarios with a limited number of agents, the action-augmented variant demonstrates comparable performance. However, when the agent count increases (as evident in the 11-agent cases of CA and corner), the incorporation of actions can introduce misleading diversity, detracting from the authenticity of the outcomes.}

\begin{table}[ht]
    \centering
    \begin{tabular}{c|ccc}
         &  3v1 & CA & Corner\\
         SIPO-RBF & 3.0 (0.8)	&3.3 (0.5)	&2.7 (0.5) \\
        SIPO-RBF w. Action & 	3.0 (0.0)	&2.3 (0.5)	&1.0 (0.0)\\
    \end{tabular}
    \caption{Ablation study of combining state- and -action-based diversity measures. The number of different strategies across a population of 4 is shown with standard deviation in the brackets.}
    \label{tab:state-action-combine}
\end{table}

\subsection{How to Adjust Constraint-Related Hyperparameters}

\begin{figure}
\begin{minipage}{0.34\linewidth}
\centering
\includegraphics[width=\textwidth]{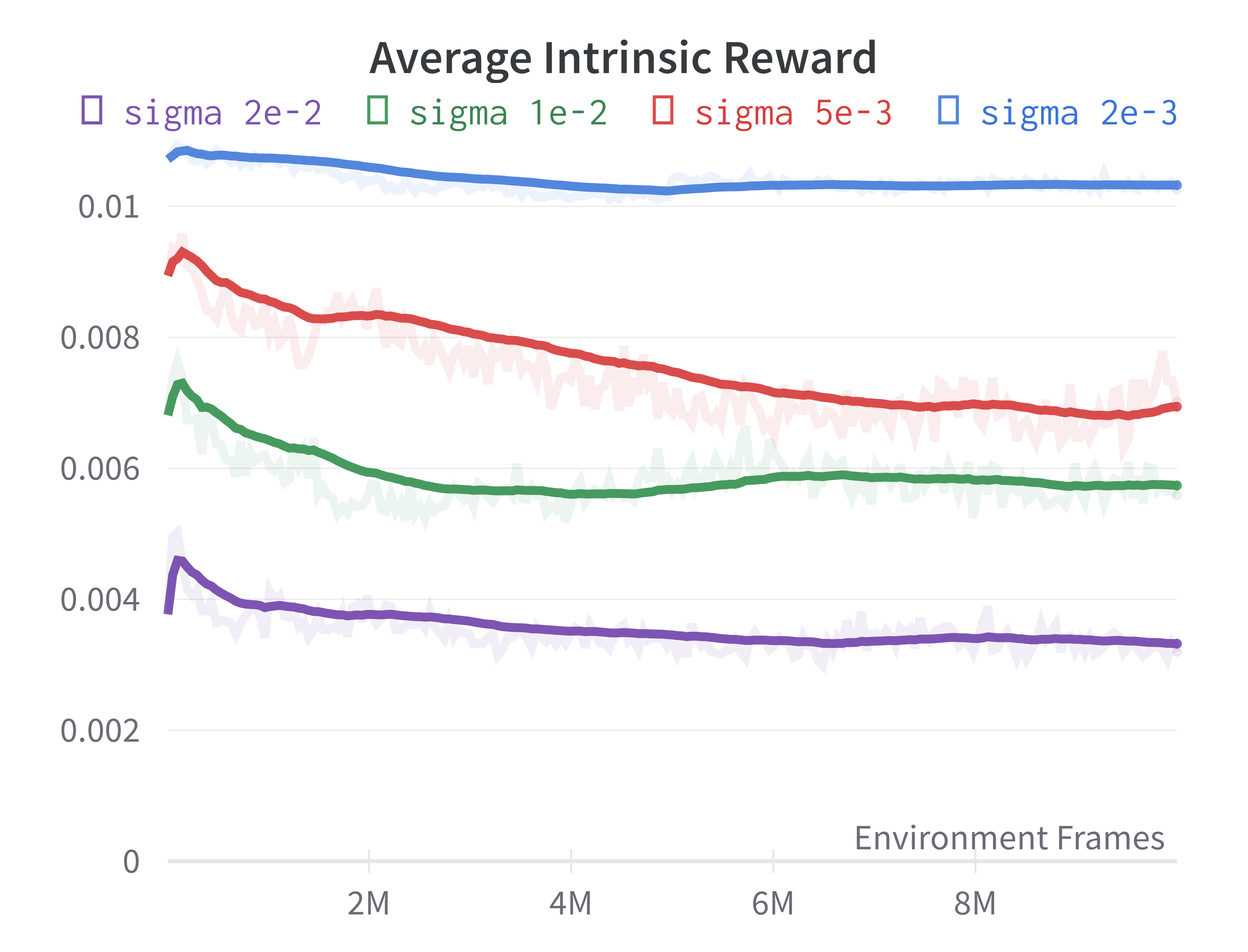}
\caption{Average intrinsic reward during training $\pi_1$.}
\label{fig:rint-trend}
\end{minipage}
\hfill
\begin{minipage}{0.65\linewidth}
\centering
\captionof{table}{The values of $\delta$ and $\alpha$ in different environments.}
    \label{tab:diversity hyperparameter}
    \begin{tabular}{ccccccc}
    \toprule  
     & \multicolumn{3}{c}{football}&\multicolumn{2}{c}{smac} \\
    \cmidrule(lr){2-4} \cmidrule(lr){5-6}
     & \emph{3v1} & \emph{corner} & \emph{CA} & 2m\_vs\_1z  & 2c\_vs\_64zg \\
    \midrule
    $\delta^\textrm{WD}$ & $0.004$ & $0.01$ & $0.012$ & $0.02$  & $0.2$\\
    $\alpha^\textrm{WD}$ & $1$ & $1$ & $0.5$ & $0.5$ & $0.05$ \\ 
    $\delta^\textrm{RBF}$ & 0.03 & 0.01 & 0.015 & 0.002 & 0.001\\
    $\alpha^\textrm{RBF}$ & 0.001 & 0.001 & 0.001 & 0.001 & 0.001\\
    $\sigma^2$ & 0.02 & 0.02 & 0.02 & 0.02 & 0.02 \\
    \bottomrule
    \end{tabular}
\end{minipage}
\end{figure}

Three hyperparameters are essential in the implementation of the intrinsic reward $r_\textrm{int}$: the threshold $\delta$, the intrinsic reward scale factor $\alpha$, and the variance factor $\sigma$ in $r_\textrm{int}^\textrm{RBF}$. These parameters differ under different domains and must be adjusted individually.
We find proper parameters by running two iterations without constraints and get two similar policies $\pi_0$ and $\pi_1$.
We record $r_\textrm{int}$ during training $\pi_1$ and the trend is shown in Fig.~\ref{fig:rint-trend}.
Not surprisingly, $r_\textrm{int}$ gradually decreases as training proceeds.

\textbf{Threshold} We set $\delta = c_1 D_\gS(\pi_0, \pi_1)$. We try several different $c_1 \in \{1, 1.2, 1.4, 1.6, 1.8, 2.0\}$ and find that $c_1 = 1.2$ or $1.4$ are universal proper solutions for all the experimental environments. 

\textbf{Intrinsic Scale Factor} We need to balance the intrinsic reward $r_{int}$ and the original reward $J$ so that neither of the two rewards can dominate the training process. Empirically, the maximums of the two rewards should be in the same order of magnitude. i.e., $\max_{\pi}J(\pi) = \alpha \times c_2\lambda_{max}\delta$, where $c_2 = O(1)$. When $c_2$ is too large, the new-trained policy $\pi_j$ will oscillate near the boundary of $D(\pi_i, \pi_j) = \delta$ for some pre-trained policy $p_i$. Conversely, when $c_2$ is too small, the intrinsic reward $r_{int}$ cannot yield diverse strategies. In experiments, we set $c_2 = 1.0$.

\textbf{Variance Factor} We sweep the variance factor across $\{1e-3,5e-3,1e-2,2e-2,1e-3\}$ by training $\pi_1$ and observe the trend of intrinsic rewards.
We find the steepest trend and select the corresponding $\sigma$. Empirically, we find that our algorithm performs robustly well when $\sigma^2=0.02$.

The $\delta$ and $\alpha$ of GRF and SMAC are listed in Table~\ref{tab:diversity hyperparameter}.

\subsection{Computation of Action-Based Measures in the Grid-World Example}

\begin{figure}[hb]
    \centering
    \includegraphics[width=0.85\textwidth]{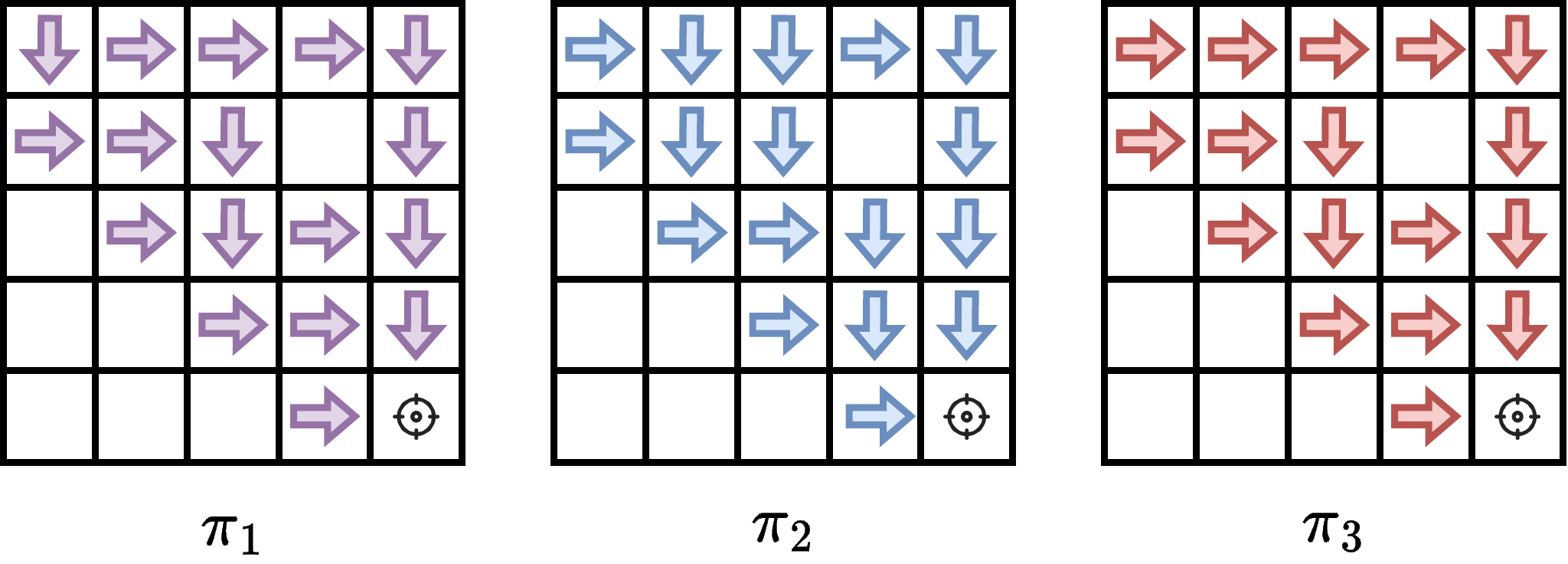}
    \caption{Policies in the grid-world example when $N_G=5$.}
    \label{fig:gw-policy}
\end{figure}

We consider the policies illustrated in Fig.~\ref{fig:gw-policy}.
These policies are all optimal since these actions only include ``right'' and ``down'' and actions on non-visited states can be arbitrary.
We only mark actions on states visited by any of these 3 policies and actions on other states can be considered the same.

\subsubsection{Action-Distribution-Based Measures}

Action-distribution-based diversity measures can be defined as
\begin{equation}
    D_\gA (\pi_i,\pi_j)=\E_{s\sim q(s)}\left[\tilde{D}\left(\pi_i(\cdot\mid s)\Vert\pi_j(\cdot\mid s)\right)\right],
\end{equation}
where $\tilde{D}(\cdot,\cdot):\triangle\times\triangle\to\R$ is a measure over action distributions and $q:\triangle(\gS)$ is a state distribution. Here, we consider $q$ to be the joint state distribution visited by $\pi_i$ and $\pi_j$.

\paragraph{KL Divergence} KL divergence is defined by
\begin{equation*}
    D_\textrm{KL}\left(\pi_i(\cdot\mid s),\pi_j(\cdot\mid s)\right)
    =\int_\gA \pi_i(a\mid s)\log\frac{\pi_i(a\mid s)}{\pi_j(a\mid s)}\textrm{d}a.
\end{equation*}
When $\pi_j(a\mid s)=0$ at any state $s$, KL divergence is $+\infty$.
Since the trajectories of these policies have disjoint states, $D_\gA^\textrm{KL}(\pi_1,\pi_2)=D_\gA^\textrm{KL}(\pi_1,\pi_3)=+\infty$.
Similar results can be obtained for cross-entropy.

\paragraph{$\textrm{JSD}_\gamma$}
$\textrm{JSD}_\gamma$ was defined in \cite{lupu_trajectory_2021} and we consider two special cases when $\gamma=0$ and $\gamma=1$.

As illustrated by \cite{lupu_trajectory_2021}, $\textrm{JSD}_0$ measures the expected number of times two policies will ``disagree'' by selecting different actions.
On trajectories induced by $\pi_1$ and $\pi_2$, there are $4+4$ states that $\pi_1$ disagrees with $\pi_2$ ($\pi_1$ and $\pi_2$ are symmetric) and $D_\gA^{\textrm{JSD}_0}(\pi_1,\pi_2)=8/16=1/2$.
Similarly, $\pi_1$ and $\pi_3$ only disagree at the initial state, therefore we have $D_\gA^{\textrm{JSD}_0}(\pi_1,\pi_3)=2/16=1/8$.

$\textrm{JSD}_1$ is defined by
\begin{align*}
    \textrm{JSD}_1(\pi_i,\pi_j)=&-\frac{1}{2}\sum_{\tau_i}P(\tau_i\mid\pi_i)\sum_{t=1}^T\frac{1}{T}\log\frac{\pi_i(\tau_i)+\pi_j(\tau_i)}{2\pi_i(\tau_i)}\\
    &-\frac{1}{2}\sum_{\tau_j}P(\tau_j\mid\pi_j)\sum_{t=1}^T\frac{1}{T}\log\frac{\pi_i(\tau_j)+\pi_j(\tau_j)}{2\pi_j(\tau_j)}.
\end{align*}
Since each of the policies considered only induces a single trajectory and $\pi_i(\tau_j)=0$ $(i\neq j)$, we can easily compute
\begin{align*}
    D_\gA^{\textrm{JSD}_1}(\pi_1,\pi_2)=D_\gA^{\textrm{JSD}_1}(\pi_1,\pi_3)=\log 2
\end{align*}

\paragraph{Wasserstein Distance}
Wasserstein distance or Earth Moving Distance (EMD) is 1 if two policies disagree on a state and 0 otherwise.
Therefore, it equals to $D_\gA^{\textrm{JSD}_0}$.

\subsubsection{Action Norm}

We embed the action ``right'' as vector $[1,0]$ since it increases the x-coordinate by 1 and the action ``down'' as vector $[0,-1]$ since it decreases the y-coordinate by 1.
This embedding can be naturally extended to a continuous action space with velocity actions.
Following ~\cite{parker-holder_effective_2020}, we compute the action norm over a uniform distribution on states.
We can see that there are $7$ states where $\pi_1$ and $\pi_2$ perform differently and $1$ state (the initial state) where $\pi_1$ and $\pi_3$ perform differently. Therefore, we can get $D(\pi_1,\pi_2)=\sqrt{7}$ and $D(\pi_1,\pi_3)=1$.

\subsubsection{State-Distance-Based Measures}

\paragraph{State $L_2$ Norm}
Similar to action $L_2$ norm, we concatenate the coordinates instead of actions as the embedding and compute the $L_2$ norm between embedding.

\paragraph{Wasserstein Distance}
Wasserstein distance is tractable in the grid-world example.
We consider 7 states (except the initial and final states) in each trajectory and compute the pair-wise distance as matrix $C$.
Then we solve the following linear programming
\begin{align*}
    \begin{split}
        \min_\gamma \quad &\sum_{i,j}\gamma\odot C\\
        \textrm{s.t.}\quad &\gamma\bm{1}=a,\,\gamma^T\bm{1}=b\\
        &\gamma_{i,j}\ge 0
    \end{split}
\end{align*}
where $\odot$ means element-wise multiplication, $\bm{1}$ is a all-one vector, $a=[\bm{1}^T,\bm{0}^T]^T$ and $b=[\bm{0}^T,\bm{1}^T]^T$ is the marginal state distribution of each policy.

\section{Environment Details}
\label{app:env}

\subsection{Details of the 2D Navigation Environment}

The navigation environment has an agent circle with size $a$ and 4 landmark circles with size $b$.
We pre-specify a threshold $c$ and constrain that the distance of final states reaching different landmarks must be larger than $c$.
Correspondingly, landmark circles are randomly initialized by constraining the pairwise distance between centers to be larger than a threshold $c+2(a+b)$ such that the final-state constraint is valid.
An episode ends if the agent touches any landmarks, i.e., the distance between the center of the agent and the center of the landmark $d<a+b$, or 1000 timesteps have elapsed.
The observation space includes the positions of the agent and all landmarks, which is a 10-dimensional vector.
The action space is a 2-dimensional vector, which is the agent velocity. The time interval is set to be $\Delta t=0.1$, i.e., the next position is computed by $x_{t+1}=x_t+\Delta t\cdot v$.
The reward is $1$ if the agent touches the landmark and $0$ otherwise.

\subsection{Details of Environments}

We provide training configurations and environment introductions below and refer readers to our project website in App.~\ref{app:web} for visualizations of these environments.

\paragraph{Humanoid} We use the Humanoid environment in IsaacGym~\cite{makoviychuk2021isaac} with default observation and action spaces.
The input of intrinsic rewards or diversity measure is the observation without all torque states.

\paragraph{SMAC} We adopt the SMAC environment in the MAPPO codebase\footnote{\url{https://github.com/marlbenchmark/on-policy}} with the same configuration as~\citet{yu2021surprising}.
The input of intrinsic rewards or diversity measure is the state of all allies, including positions, health, etc.

On the ``easy'' map \emph{2m\_vs\_1z}, two marines must be controlled to defeat a Zealot. The marines can attack from a distance, while the Zealot's attacks are limited to close range. A successful strategy involves alternating the marines' attacks to distract the Zealot.
On the ``hard'' map \emph{2c\_vs\_64zg}, two colossi must be controlled by the agents to fight against 64 zergs. The colossi have a wider attack range and can move over cliffs. Strategies on this map may include hit-and-run tactics, waiting in corners, or dividing and conquering enemies.
The level of difficulty is determined by the learning performance of existing MARL algorithms. Harder maps require more exploration and training steps.

\paragraph{GRF} We adopt the ``simple115v2'' representation as observation with both ``scoring'' and ``checkpoint'' reward.
The reward is shared across all agents. The input of intrinsic rewards or diversity measure is the position and velocity of all attackers and the ball.
All policies are trained to control the left team to score against built-in bots.

\emph{academy\_3\_vs\_1\_with\_keeper}: In this scenario, a team of three players (left) tries to score a goal against a single defender and a goalkeeper. The left team starts with the ball and has to dribble past the defender and the goalkeeper to score a goal.

\emph{academy\_counterattack\_easy}: In this scenario, the left team starts with the ball in the front yard and tries to score a goal against several defenders. All eleven players in the left players can be controlled.

\emph{academy\_corner}: In this scenario, the left team tries to score a goal from a corner kick. The right team defends the goal and tries to prevent the left team from scoring. All eleven players in the left players can be controlled.

\section{Implementation Details}
\label{app:imple}

\subsection{2D Navigation}

\begin{table}
    \centering
    \scriptsize
    \caption{Hyperparameters in the 2D navigation environment.}
    \label{tab:navi-hyper}
    \begin{tabular}{cccccccccccccccc}
    \toprule
     discount & GAE $\lambda$ & PPO epochs & clip parameter &  entropy bonus & $\lambda_\textrm{max}$ & actor lr & critic lr & Lagrange lr & batch size \\
     \midrule
     0.997 & 0.95 & 10 & 0.2 & 0 & 10 & 3e-4 & 1e-3 & 0.5 & 4000 \\
     \bottomrule
    \end{tabular}
\end{table}

We apply PPO with Lagrange multipliers to optimize the policy and hyperparameters are summarized in Table~\ref{tab:navi-hyper}.
$D(\pi_i,\pi_j)$ is simply taken as the $L_2$ distance of the final state reached by $\pi_i$ and $\pi_j$, i.e., $D(\pi_i,\pi_j)=\Vert s_H^{\pi_i}-s_H^{\pi_j}\Vert^2$.
The applied algorithm is the same as {\name} (see Appendix~\ref{app:algorithmic}) except that the intrinsic reward is only computed at the last timestep.

\subsection{{\name}}

In the $i$-th iteration ($1\le i\le M$), we learn an actor and a critic with $i$ separate value heads to accurately predict different return terms, including $i-1$ intrinsic returns for the diversity constraints and the environment reward.
We include all practical tricks mentioned in~\cite{yu2021surprising} because we find them all critical to algorithm performance.
We use separate actor and critic networks, both with hidden size 64 and a GRU layer with hidden size 64.
The common hyperparameters for {\name}, baselines, and ablations are listed in Table~\ref{tab:PPO-hyperparameters}.
Other environment-specific parameters, such as PPO epochs and mini-batch size, are all the same as~\cite{yu2021surprising}.
Besides, Table~\ref{tab:diversity hyperparameter} and Table~\ref{tab:SIPO-hyper} lists some extra hyperparameters for {\name}.

\begin{table}
\centering
\scriptsize
\begin{threeparttable}
\caption{Common hyperparameters for {\name}, baselines, and ablations.}
\label{tab:PPO-hyperparameters}
\begin{tabular}{ccccccccccccc}
\toprule  
discount & GAE $\lambda$ & actor lr & critic lr  & clip parameter & entropy bonus & GRF batch size & SMAC batch size \\
\midrule
0.99 & 0.95 & 5e-4 & 1e-3   & 0.2 & 0.01 & 9600 & 3200\\
\bottomrule
\end{tabular}
\end{threeparttable}
\end{table}

\begin{table}
\centering
\begin{threeparttable}
\caption{{\name} hyperparameters across all environments.}
\label{tab:SIPO-hyper}
\begin{tabular}{ccccccccc}
\toprule  
$\lambda_\textrm{max}$ & Discriminator lr & Lagrangian lr\\
\midrule
 $10$ & 4.0e-4 & $0.1$\\
\bottomrule
\end{tabular}
\end{threeparttable}
\end{table}

\subsection{Baselines}
\label{app:baseline}

We re-implement all baselines with PPO based on the MAPPO~\cite{yu2021surprising} project.
All algorithms run for the same number of environment frames.
Specific hyperparameters for baselines can be found in Appendix~\ref{app:baseline}.

\paragraph{SMERL}
SMERL trains a latent-conditioned policy that can robustly adapt to new scenarios. It promotes diversity by maximizing the mutual information between states and the latent variable. 
We implement SMERL with PPO, where the actor and the critic take as the input the concatenation of observation and a one-hot latent variable.
The discriminator is a 2-layer feed-forward network with 64 hidden units. The learning rate of the discriminator is the same as the learning rate of the critic network.
The input of the discriminator is the same as the input we use for {\namewd}.
The critic has 2 value heads for an accurate estimation of intrinsic return.
Since SMERL trains a single latent-conditioned policy, we train SMERL for $M\times$ more environment steps, such that total environment frames are the same.
The scaling factor of intrinsic rewards is $0.1$ and the threshold for diversification is $[0.81, 0.45, 0.72]$ ($0.9\times[0.9, 0.5, 0.8]$) for ``3v1'', ``counterattack'', and ``corner'' respectively.

\paragraph{DvD}
DvD simultaneously trains a population of policies to maximize the determinant of a kernel matrix based on action difference.
We concatenate the one-hot actions along a trajectory as the behavioral embedding.
The square of the variance factor, i.e., $\sigma^2$ in the RBF kernel, is set to be the length of behavioral embedding.
We also use the same Bayesian bandits as proposed in the original paper.
Training DvD in ``counterattack'' and ``corner'' exceeds the GPU memory and we exclude the results in the main body.

\paragraph{DIPG}
DIPG iteratively maximizes the maximum mean discrepancy (MMD) distance between the state distribution of the current policy and previously discovered policies.
For DIPG, we follow the open-source implementation\footnote{\url{https://github.com/dtak/DIPG-public}}.
We set the same variance factor in the RBF kernel as {\namerbf} and apply the same state as the input of the RBF kernel.
We sweep the coefficient of MMD loss among $\{0.1, 0.5, 0.9\}$ and find $0.1$ the most appropriate (larger value will cause training instability).
We use the same method to save archived trajectories as {\name} and the input of the RBF kernel is the same as the input we use for {\namerbf}.
To improve training efficiency, we only back-propagate the MMD loss at the first PPO epoch.

\paragraph{RSPO}
RSPO iteratively discovers diverse policies by optimizing extrinsic rewards on novel trajectories while optimizing diversity on other trajectories. The diversity measure is defined as the action-cross entropy along the trajectory.
For RSPO, we follow the opensource implementation\footnote{\url{https://github.com/footoredo/rspo-iclr-2022}} and use the same hyperparameters on the SMAC \emph{2c\_vs\_64zg} map in the original paper for GRF experiments.

\paragraph{TrajDi}
TrajDi was originally designed for cooperative multi-agent domains to facilitate zero-shot coordination. 
It defines a generalized Jensen-Shanon divergence objective between policy action distributions.
Then this objective and rewards are simultaneously optimized via population-based training.
We tried TrajDi in SMAC and GRF.
We sweep the action discount factor among $\{0.1, 0.5, 0.9\}$ and the coefficient of TrajDi loss among $\{0.1, 0.01, 0.001\}$.
However, TrajDi fails to converge in the ``3v1'' scenario and exceeds the GPU memory in the ``counterattack'' and ``corner'' scenarios.
Therefore, we exclude the performance of TrajDi in the main body.

\cmrd{
\paragraph{Domino}
We have meticulously re-implemented Domino within our codebase according to the appendix of \citet{domino}. We execute the algorithm in the Humanoid locomotion task, employing the robot state (excluding torques) for successor feature computation. Despite our earnest efforts to optimize Domino's performance, our findings reveal its comparable performance to SMERL, illustrated by a minimal diversity score of 0.01.
Therefore, we exclude the performance of Domino in the main body.
}

\cmrd{
\paragraph{APT}
APT maximized the nearest neighbor state-entropy estimation for skill discovery. While we also adopted this metric for diversity evaluation, there is a fundamental distinction in formulation. APT optimizes state entropy within a single policy, whereas our method, SIPO, targets the joint entropy of a population of policies. It is okay for each single policy within the population to have low state entropy.
To employ APT's objective of discovering diverse policies, training a population of agents concurrently is required. The algorithm should optimize the estimated entropy over states visited by all policies. Yet, this approach mandates large-scale k-NN computation (k=12) over substantial batches, leading to significant computational inefficiency. Despite our dedicated efforts, we didn’t finish a single training trial of APT within 48 hours (in contrast to other PBT baselines, e.g. DvD, which completes training in less than 8 hours).
}

\subsection{Ablation Study Details}
For the three ablation studies: fix-L, CE, and filter, we list the specific hyperparameters here:
\begin{itemize}
    \item fix-L: we set the Lagrange multiplier to be $0.2$; 
    \item CE: the threshold is $3.800$ and the intrinsic reward scale factor is $1/1000$ of that in the WD setting;
    \item filter: all the hyperparameters in the setting are the same as those in the WD setting.
\end{itemize}

\section{Proofs}

\subsection{Proof of theorem \ref{thm:2delta}}
\label{app:proof1}

\textbf{Theorem~\ref{thm:2delta}. }
\textit{
Assume $D$ is a distance metric. Denote the optimal value of Problem~\ref{eq:pbt-co} as $T_1$.
Let $T_2=\sum_{i=1}^M J(\tilde{\pi}_i)$ where
\begin{align}
    \begin{split}
        \tilde{\pi}_i &=\arg\max_{\pi_i} \quad  J(\pi_i)\\
        \textrm{s.t.} & \quad  D(\pi_i,\tilde{\pi}_j)\ge\delta/2,\quad\,\forall 1\le j< i
    \end{split}\tag{3}
\end{align}
for $i=1,\dots,M$, then $T_2\ge T_1$.
}

\begin{proof}
Suppose the optimal solution of Problem~\ref{eq:pbt-co} is $\pi_{1}, \pi_{2}, ... , \pi_{M}$ satisfying $J(\pi_1) \ge J(\pi_2) \ge ... \ge J(\pi_M)$ and the optimal solution of Problem~\ref{eq:iter-thm1} is $\Tilde{\pi}_{1}, \Tilde{\pi}_{2}, ... , \Tilde{\pi}_{M}$ satisfying $J(\Tilde{\pi}_1) \ge J(\Tilde{\pi}_2) \ge ... \ge J(\Tilde{\pi}_M)$.

Assume the contrary that Thm.~\ref{thm:2delta} is not true, which means $\sum_{i=1}^M J(\pi_i) = T_1 > T_2 = \sum_{i=1}^M J(\Tilde{\pi}_i)$. Then we choose the smallest number $N\le M$ that satisfies 
$$\sum_{i=1}^N J(\pi_i) > \sum_{i=1}^N J(\Tilde{\pi}_i).$$

By $T_1 > T_2$ we know that $N$ exists. In addition, because Problem~\ref{eq:iter-thm1} solves unconstrained RL in the first iteration, we know that $\Tilde{\pi}_1 = \argmax_\pi J(\pi)$ and then $J(\pi_1) \le J(\Tilde{\pi}_1)$. Therefore, $N \ge 2$. 

Suppose $J(\pi_N) \le J(\Tilde{\pi}_N)$. Then we have 

$$\sum_{i=1}^{N-1} J(\pi_i) > \sum_{i=1}^{N-1} J(\Tilde{\pi}_i).$$

Contradicting the fact that $N$ is the smallest number satisfies that equation.

Hence, we know that $J(\pi_N) > J(\Tilde{\pi}_N)$. Then 
$$J(\pi_1) \ge J(\pi_2) \ge ... \ge J(\pi_N) > J(\Tilde{\pi}_N). $$

Consider the optimization problem of $\Tilde{\pi}_N$:

\begin{align*}
    \begin{split}
        \tilde{\pi}_N &=\arg\max_{\pi} \quad  J(\pi)\\
        \textrm{s.t.} & \quad  D(\pi,\tilde{\pi}_j)\ge\delta/2,\quad\,\forall 1\le j< N.
        \label{eq:iter-thm1}
    \end{split}
\end{align*}

This optimization does not find $\{\pi_1,\dots,\pi_N\}$ but find $\Tilde{\pi}_N$, which means that for each $\pi_i$, $1 \le i \le N$, there exists $1 \le j_i < N$ such that $D(\pi_i,\tilde{\pi}_{j_i}) < \delta/2$.
Otherwise, we will get the solution of the above problem as $\pi_i$ instead of $\Tilde{\pi}_N$.

By the Pigeonhole Principle, we know that there exist two indexes $i_1\in[N]$ and $i_2\in[N]$ $(i_1\neq i_2)$ such that $j_{i_1}=j_{i_2}=\hat{j}$. Then we have
$$D(\pi_{i_1},\pi_{i_2}) \le D(\pi_{i_1},\tilde{\pi}_{\hat{j}}) + D(\pi_{i_2},\tilde{\pi}_{\hat{j}}) < \delta/2 + \delta/2 = \delta ,$$
where the inequality follows by the triangle inequality of the distance function. 

It contradict with the fact that $D(\pi_{i_1},\pi_{i_2}) \ge \delta$ in Problem~\ref{eq:pbt-co}.

Therefore, we prove the theorem $\sum_{i=1}^M J(\pi_i) = T_1 \le  T_2 = \sum_{i=1}^M J(\Tilde{\pi}_i)$.
\end{proof}

\subsection{Proof of Theorem~\ref{thm:converge}}
\label{app:proof2}
In this section, we consider the $i$-th iteration of {\name} illustrated in Eq.~(\ref{eq:iter-co}).
For the sake of simplicity, we use 
$a\le\bm{\lambda}\le b$ for vector $\bm{\lambda}$ to denote each component of $\bm{\lambda}$ satisfies $a\le\lambda_i\le b$, where $a,b\in\R$.  We use $\pi$ to denote the policy we are optimizing, and $\pi_j$ $(1\le j< i)$ to denote a previously obtained policy. 
We denote the Lagrange function as $L(\pi, \bm{\lambda}) = -J(\pi) - \sum_{j=1}^{i-1}  \lambda_j\left(D(\pi,\pi_j) - \delta\right)$.

To prove Theorem~\ref{thm:converge}, we consider the following two optimization problems:
\begin{equation}
    (\pi_i,\bm{\lambda}^\star)=\arg\min_{\pi} \max_{\bm{\lambda}\ge 0} L(\pi, \bm{\lambda})
    \label{eq:lagrange}
\end{equation}
and
\begin{equation}
    (\tilde{\pi}_i,\tilde{\bm{\lambda}}^\star)=\arg\min_{\pi} \max_{0\le\bm{\lambda}\le\Lambda} L(\pi, \bm{\lambda}),
    \label{eq:lagrange-bounded}
\end{equation}
where $\Lambda=\frac{1}{\epsilon_0}$ and $\epsilon_0>0$ is sufficiently small.

We make the following assumptions to prove this theorem:
\begin{assumption}
\label{assump1}
    $0\le J(\cdot)\le 1$.
\end{assumption}

\begin{assumption}
\label{assump2}
    $\forall\bm{\lambda}\ge0,\, L(\cdot,\bm{\lambda})$ is $l$-smooth and $\zeta$-Lipschitz.
\end{assumption}

We may notice that solving the optimization problem (11) is hard because its domain is unbounded. Therefore, we make some approximations and consider the bounded optimization problem (12). First, we prove the following lemma about the value function $J$:
\begin{lemma}
\label{lem1}
$J(\pi_i) \le J(\Tilde{\pi}_i)$.
\end{lemma}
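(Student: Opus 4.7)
The plan is to compare the two min–max problems by rewriting each inner maximization in closed form, thereby turning (11) into the original constrained problem and (12) into an $\ell_1$-penalized relaxation. Once both problems are in these equivalent primal forms, the inequality will drop out of the minimizing property of $\tilde{\pi}_i$.

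First I would analyze (11). For fixed $\pi$, the inner problem $\max_{\bm{\lambda}\ge 0}\bigl[-\sum_j \lambda_j(D(\pi,\pi_j)-\delta)\bigr]$ equals $0$ if $D(\pi,\pi_j)\ge\delta$ for all $j<i$, and $+\infty$ otherwise (taking the offending $\lambda_j\to\infty$). Hence the outer minimizer $\pi_i$ must be feasible for the diversity constraints, and (11) reduces to $\min_\pi\{-J(\pi): D(\pi,\pi_j)\ge\delta,\ \forall j<i\}$, so that $\pi_i$ is the maximizer of $J$ over the feasible set and the optimal value of (11) is $-J(\pi_i)$.

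Next I would analyze (12). For fixed $\pi$ and each $j<i$, $\max_{0\le\lambda_j\le\Lambda}\bigl[-\lambda_j(D(\pi,\pi_j)-\delta)\bigr] = \Lambda\,\max\bigl(0,\,\delta-D(\pi,\pi_j)\bigr)$, so (12) is equivalent to the unconstrained penalized problem
\begin{equation*}
\tilde{\pi}_i \;=\; \arg\min_{\pi}\;\Bigl[\,-J(\pi) \;+\; \Lambda\sum_{j=1}^{i-1}\max\bigl(0,\,\delta-D(\pi,\pi_j)\bigr)\Bigr].
\end{equation*}

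Finally I would compare the two values. Evaluate the penalized objective at $\pi = \pi_i$: since $\pi_i$ satisfies every constraint $D(\pi_i,\pi_j)\ge\delta$ (from the first step), the penalty vanishes and the objective equals $-J(\pi_i)$. Evaluate it at $\pi=\tilde{\pi}_i$: the penalty is nonnegative, so the objective is at least $-J(\tilde{\pi}_i)$. Since $\tilde{\pi}_i$ is the minimizer of the penalized objective, $-J(\tilde{\pi}_i)\;\le\;-J(\tilde{\pi}_i) + \Lambda\sum_j\max(0,\delta-D(\tilde{\pi}_i,\pi_j))\;\le\; -J(\pi_i)$, which rearranges to $J(\pi_i)\le J(\tilde{\pi}_i)$, as required.

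No step looks like a serious obstacle here; the only subtlety is justifying that $\pi_i$ must be strictly feasible, which follows from the $\{0,+\infty\}$-dichotomy of the inner max in (11) together with the existence of at least one feasible $\pi$ (guaranteed by the boundedness of $J$ from Assumption~\ref{assump1} and the premise that the original constrained problem is solvable). The rest is routine manipulation of the Lagrangian in its penalty form.
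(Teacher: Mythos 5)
Your proof is correct and follows essentially the same route as the paper's: both arguments reduce to showing that the saddle value of the unbounded problem equals $-J(\pi_i)$ via feasibility of $\pi_i$, that the bounded problem's value is at most this, and that it is at least $-J(\tilde{\pi}_i)$ because the penalty term is nonnegative. Your explicit closed-form evaluation of the inner maximizations (the $\{0,+\infty\}$ dichotomy and the hinge penalty $\Lambda\max(0,\delta-D)$) is a slightly more concrete presentation of the same comparison the paper makes abstractly through $L(\pi_i,\bm{\lambda}^\star)\ge L(\tilde{\pi}_i,\tilde{\bm{\lambda}}^\star)$.
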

\begin{proof}
As the domain of $\bm{\lambda}$ in Eq.~\ref{eq:lagrange-bounded} is smaller than Eq.~(\ref{eq:lagrange}), 
we have $L(\pi_i, \bm{\lambda}) \ge L(\Tilde{\pi}_i, \Tilde{\bm{\lambda}})$.

By the fundamental property of Lagrange duality, we know that $L$ achieves its optimal value when $\bm{\lambda} = 0$ and the optimal value is $-J(\pi_i)$.

By the optimality of $(\Tilde{\pi}_i, \Tilde{\bm{\lambda}}^\star)$, we know that 
\begin{equation}
\label{eq:bounded-constraint-violation}
    - \sum_{j=1}^{i-1}  \Tilde{\lambda}^\star_j(D(
\Tilde{\pi}_i,\pi_j) - \delta) \ge 0.
\end{equation}

Then we have 
$$-J(\pi_i) = L(\pi_i, \bm{\lambda}^\star) \ge \Tilde{L}(\Tilde{\pi}_i, \Tilde{\bm{\lambda}}^\star) = - J(\Tilde{\pi}_i) - \sum_{j=1}^{i-1}  \Tilde{\lambda}^{\star}_j(D(
\Tilde{\pi}_i,\pi_j) - \delta) \ge - J(\Tilde{\pi}_i).$$
\end{proof}

Then we prove the distance between optimal policy $\Tilde{\pi}_i$ in problem (12) and optimal policy $\pi_i$ in problem (11) is very small:
\begin{lemma}
\label{lem2}
Under Assumption~\ref{assump1}, $D(\Tilde{\pi}_i, \pi_j) \ge \delta - \epsilon_0 ,\,\forall 1\le j< i$.
\end{lemma}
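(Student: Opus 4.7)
The plan is to argue by contradiction, exploiting the upper bound $\Lambda = 1/\epsilon_0$ as a penalty strong enough that any non‑trivial constraint violation would overwhelm the gain from $-J$. Concretely, suppose to the contrary that there is some index $k$ with $D(\tilde{\pi}_i,\pi_k) < \delta - \epsilon_0$. I will then construct a specific dual vector that witnesses a large value of the Lagrangian at $\tilde{\pi}_i$ and compare it against a feasible benchmark policy.

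First, I would establish a ceiling on the min–max value using the unbounded optimizer $\pi_i$ from Eq.~(\ref{eq:lagrange}) as a comparator. Feasibility of problem~(\ref{eq:lagrange}) forces $D(\pi_i,\pi_j)\ge \delta$ for every $j<i$ (otherwise the inner sup over $\bm{\lambda}\ge 0$ would blow up to $+\infty$), so at $\pi = \pi_i$ each factor $-\lambda_j(D(\pi_i,\pi_j)-\delta)$ is non-positive on $\bm{\lambda}\ge 0$. Hence
\begin{equation*}
\max_{0\le\bm{\lambda}\le\Lambda} L(\pi_i,\bm{\lambda}) \;=\; -J(\pi_i) \;\le\; 0,
\end{equation*}
where the last inequality uses Assumption~\ref{assump1}. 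By the defining optimality of $\tilde{\pi}_i$ for the outer minimization in Eq.~(\ref{eq:lagrange-bounded}), this yields $\max_{0\le\bm{\lambda}\le\Lambda} L(\tilde{\pi}_i,\bm{\lambda}) \le 0$.

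Next, I would derive a matching floor that contradicts this ceiling. Using the assumed violation, choose $\bm{\lambda}'$ with $\lambda'_k=\Lambda=1/\epsilon_0$ and $\lambda'_j=0$ for $j\neq k$. Since $\bm{\lambda}'$ is feasible for the inner maximization,
\begin{equation*}
\max_{0\le\bm{\lambda}\le\Lambda} L(\tilde{\pi}_i,\bm{\lambda}) \;\ge\; L(\tilde{\pi}_i,\bm{\lambda}') \;=\; -J(\tilde{\pi}_i) - \Lambda\bigl(D(\tilde{\pi}_i,\pi_k)-\delta\bigr) \;>\; -J(\tilde{\pi}_i) + \Lambda\epsilon_0 \;=\; -J(\tilde{\pi}_i) + 1 \;\ge\; 0,
\end{equation*}
where the strict inequality uses the assumed violation $\delta - D(\tilde{\pi}_i,\pi_k) > \epsilon_0$ and the final step again uses Assumption~\ref{assump1}. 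This strict positivity contradicts the non-positive ceiling from the previous paragraph, so no such $k$ can exist, and the lemma follows.

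The technical core is really a single line: pair up the dual bound $\Lambda = 1/\epsilon_0$ with the value bound $|J|\le 1$ so that $\Lambda\epsilon_0$ exactly offsets the range of $J$. The only subtle point to double-check is the feasibility of $\pi_i$ in the comparator step, which is guaranteed by the setup of Eq.~(\ref{eq:lagrange}) and is implicitly used in Lemma~\ref{lem1} as well; otherwise the argument is entirely elementary and does not require Assumption~\ref{assump2}.
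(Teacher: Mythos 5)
Your proof is correct and follows essentially the same route as the paper's: argue by contradiction, concentrate the full dual budget $\Lambda = 1/\epsilon_0$ on the violated constraint so that the penalty $\Lambda\epsilon_0 = 1$ dominates the range of $J$ from Assumption~\ref{assump1}, and compare against the value $-J(\pi_i)$ attained by the feasible unconstrained optimizer. Your two-sided ceiling/floor organization is a minor cosmetic reshuffling of the paper's single chain of inequalities, and your explicit flagging of the feasibility of $\pi_i$ is a point the paper also uses implicitly.
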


\begin{proof}
We prove this by contradiction.

Suppose there exists $1\le j_0< i $, $D(\Tilde{\pi}_i, \pi_{j_0})  <  \delta - \epsilon_0$. Then we choose $\hat{\bm{\lambda}}$ such that
\begin{equation*}
\hat{\lambda}_j=\left\{
\begin{aligned}
\Lambda& \quad  \text{$j = j_0$} \,,\\
0&  \quad \text{$1 \le j < i, j \neq j_0$} \,.\\
\end{aligned}
\right.
\end{equation*}

By the Assumption~\ref{assump1}, Eq.~(\ref{eq:bounded-constraint-violation}), and $\Lambda = \frac{1}{\epsilon_0}$, we have 

$$0 \ge - J(\pi_i) = L(\pi_i, \bm{\lambda}^\star)  \ge  L(\Tilde{\pi}_i, \Tilde{\bm{\lambda}}^\star) \ge  L(\Tilde{\pi}_i, \hat{\bm{\lambda}}) \ge -1 - \Lambda(D(\Tilde{\pi}_i, \pi_{j_0}) - \delta) > 0.$$

That is a contradiction. So we have proved that 
$$D(\Tilde{\pi}_i, \pi_j) \ge \delta - \epsilon_0 ,\quad\,\forall 1\le j< i .$$
\end{proof}

From the deduction above, we get the following approximation lemma:
\begin{lemma}
\label{lem3}
Denote the optimal solution of Eq.~\ref{eq:lagrange} and Eq.~\ref{eq:lagrange-bounded} as $(\pi_i, \lambda)$ and $ (\Tilde{\pi}_i, \Tilde{\lambda})$ respectively. Then we have the following approximation about the optimal value and distance:
\begin{align*}
    J(\pi_i) &\le J(\Tilde{\pi}_i) \\
    D(\Tilde{\pi}_i, \pi_j) &\ge \delta - \epsilon_0 ,\quad\,\forall 1\le j< i \,
\end{align*}
\end{lemma}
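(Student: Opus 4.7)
The plan is to recognize that Lemma~\ref{lem3} is a direct consolidation of Lemma~\ref{lem1} and Lemma~\ref{lem2} applied to the \emph{same} saddle point $(\Tilde{\pi}_i, \Tilde{\bm{\lambda}}^\star)$ of the bounded min-max problem in Eq.~(\ref{eq:lagrange-bounded}). So the proof reduces to invoking those two results in sequence and checking that they can indeed be concatenated into one statement.

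First, to establish $J(\pi_i) \le J(\Tilde{\pi}_i)$, I would cite Lemma~\ref{lem1}. The essential content there is a three-term sandwich: because the dual feasible set $[0,\Lambda]^{i-1}$ of problem (\ref{eq:lagrange-bounded}) is a subset of the unbounded orthant used in (\ref{eq:lagrange}), we get $L(\pi_i, \bm{\lambda}^\star) \ge L(\Tilde{\pi}_i, \Tilde{\bm{\lambda}}^\star)$; using $L(\pi_i, \bm{\lambda}^\star) = -J(\pi_i)$ (a standard Lagrange-duality identity, since maximization in $\bm{\lambda}$ at a feasible primal point zeroes out the constraint term) and the nonnegativity $-\sum_{j=1}^{i-1}\Tilde{\lambda}_j^\star (D(\Tilde{\pi}_i,\pi_j)-\delta) \ge 0$ (forced by maximizing $L$ in $\bm{\lambda}$) yields $-J(\pi_i) \ge -J(\Tilde{\pi}_i)$, i.e.\ the first claim.

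Second, to establish $D(\Tilde{\pi}_i,\pi_j) \ge \delta - \epsilon_0$ for every $1 \le j < i$, I would invoke Lemma~\ref{lem2}. Its proof is by contradiction: if some constraint were violated by strictly more than $\epsilon_0$, then the dual point $\hat{\bm{\lambda}}$ placing mass $\Lambda = 1/\epsilon_0$ on the violated coordinate (and zero elsewhere) would satisfy $L(\Tilde{\pi}_i, \hat{\bm{\lambda}}) > 0$. But Assumption~\ref{assump1} gives $-J(\pi_i) \le 0$, and combining $L(\pi_i,\bm{\lambda}^\star) \ge L(\Tilde{\pi}_i,\Tilde{\bm{\lambda}}^\star) \ge L(\Tilde{\pi}_i,\hat{\bm{\lambda}})$ (saddle-point property of $\Tilde{\bm{\lambda}}^\star$ inside $[0,\Lambda]^{i-1}$, which contains $\hat{\bm{\lambda}}$) produces the contradiction $0 \ge -J(\pi_i) > 0$.

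Putting these two conclusions side by side gives exactly Lemma~\ref{lem3}; no further argument is needed. The only subtle point — which is not really an obstacle — is bookkeeping: one must check that the $(\Tilde{\pi}_i,\Tilde{\bm{\lambda}}^\star)$ in the statement of Lemma~\ref{lem3} is the \emph{common} saddle point referenced in both earlier lemmas, so that the value inequality and the $\epsilon_0$-feasibility inequality hold simultaneously at a single pair. This is immediate from the shared defining equation (\ref{eq:lagrange-bounded}), so the proof is complete upon citing Lemmas~\ref{lem1} and~\ref{lem2}.
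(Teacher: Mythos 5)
Your proposal is correct and matches the paper's own proof, which simply observes that Lemma~\ref{lem3} follows directly from Lemma~\ref{lem1} and Lemma~\ref{lem2}; your summaries of those two lemmas' arguments are also faithful to the paper. The bookkeeping point you raise --- that both inequalities hold at the common solution $(\Tilde{\pi}_i,\Tilde{\bm{\lambda}}^\star)$ of Eq.~(\ref{eq:lagrange-bounded}) --- is correctly resolved exactly as you say.
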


\begin{proof}
    This lemma follows directly by Lemma \ref{lem1} and Lemma \ref{lem2}.
\end{proof}

Therefore, it is reasonable to consider the constrained optimization problem (12) instead of primal problem (11) because we have proved that the optimal value doesn't get smaller and the distance of policy is $\epsilon_0$-approximation of the primal problem. Finally we use the conclusion in the paper \cite{lin2020gradient} to analysis the convergence of problem (12): 
\begin{lemma}
\label{lem4}
(\cite{lin2020gradient}, Theorem 4.8)
Under Assumption~\ref{assump2}, solving Eq.~(\ref{eq:lagrange-bounded}) via two-timescale GDA with learning rate $\eta_\pi = \Theta(\epsilon^4/l^3 \zeta^2 \Lambda^2)$ and $\eta_{\bm{\lambda}} = \Theta(1/l)$ requires 
$$\mathcal{O}\left(\frac{l^3 \zeta^2 \Lambda^2 C_1}{\epsilon^6} + \frac{l^3 \Lambda^2 C_2}{\epsilon^4}\right)$$
iterations to converge to an $\epsilon$-stationary point $\pi_i^\star$, where $C_1$ and $C_2$ are the constants that depend on the distance between the initial point and the optimal point.
\end{lemma}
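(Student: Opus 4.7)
Since this lemma is quoted verbatim from \cite{lin2020gradient} (their Theorem 4.8), my plan is not to re-derive their convergence analysis but to check that Eq.~(\ref{eq:lagrange-bounded}) fits the hypotheses of their nonconvex–concave minimax theorem and then read off the rate by substituting our parameters. So the proposal is a reduction/instantiation argument rather than a new analysis.

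First, I would verify the structural match. The Lagrangian $L(\pi,\bm{\lambda})=-J(\pi)-\sum_{j=1}^{i-1}\lambda_j(D(\pi,\pi_j)-\delta)$ is \emph{linear}, hence concave, in $\bm{\lambda}$ for any fixed $\pi$, and it is optimized over the compact box $[0,\Lambda]^{i-1}$ (with $\Lambda=1/\epsilon_0$). The outer variable $\pi$ is unconstrained in a Euclidean parameter space but the objective is generally nonconvex in $\pi$. This is exactly the nonconvex–concave setting with compact dual domain that Lin et al.\ treat. Assumption~\ref{assump2} directly supplies both of the regularity hypotheses they need: $L(\cdot,\bm{\lambda})$ is $l$-smooth and $\zeta$-Lipschitz uniformly over $\bm{\lambda}\ge 0$, and in particular over the box $[0,\Lambda]^{i-1}$.

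Second, I would confirm the algorithmic match. Two-timescale GDA in their paper alternates a primal gradient descent step on $\pi$ with a projected dual gradient ascent step on $\bm{\lambda}$; in our case projection onto $[0,\Lambda]^{i-1}$ is coordinate-wise clipping, which is trivially implementable and fits their projected formulation. The step-size prescription in the statement, $\eta_\pi=\Theta(\epsilon^4/(l^3\zeta^2\Lambda^2))$ and $\eta_{\bm{\lambda}}=\Theta(1/l)$, is the exact pairing used in their Theorem 4.8, so no re-tuning is needed. Substituting these into their general iteration-complexity bound, with the primal/dual initialization gaps named $C_1$ and $C_2$, yields the stated $\mathcal{O}\!\left(l^3\zeta^2\Lambda^2 C_1/\epsilon^6 + l^3\Lambda^2 C_2/\epsilon^4\right)$.

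The main obstacle, and the only non-mechanical step, is making the $\Lambda$-dependence explicit. Lin et al.\ state their bound in terms of the diameter of the dual constraint set, so I would need to trace through their proof once to be sure that the two $\Lambda^2$ factors appearing in our final expression come from (i) the dual diameter entering the primal smoothness of the envelope $\Phi(\pi)=\max_{\bm{\lambda}}L(\pi,\bm{\lambda})$ via the chain rule, and (ii) the dual diameter entering the projected ascent's regret bound. A secondary bookkeeping issue is that the notion of $\epsilon$-stationarity here is the one in \cite{lin2020gradient}—stationarity of $\Phi$ or equivalently of its Moreau envelope—rather than joint stationarity in $(\pi,\bm{\lambda})$; this is what is inherited by $\pi_i^\star$ and what then feeds into the broader Theorem~\ref{thm:converge}. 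With these two identifications verified, the lemma follows directly from \cite[Thm.~4.8]{lin2020gradient}.
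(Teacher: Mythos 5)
Your proposal is correct and matches the paper's treatment: the paper offers no independent derivation of this lemma, citing it directly as Theorem 4.8 of \cite{lin2020gradient}, so the right "proof" is exactly the instantiation you describe — checking that $L(\pi,\bm{\lambda})$ is linear (hence concave) in $\bm{\lambda}$ over the compact box $[0,\Lambda]^{i-1}$, that Assumption~\ref{assump2} supplies the $l$-smoothness and $\zeta$-Lipschitz hypotheses, that clipping realizes the projected dual ascent step, and that the stated step sizes and $\Lambda$-dependent constants are read off from their bound. Your added care about which notion of $\epsilon$-stationarity is inherited (stationarity of the max-envelope/Moreau envelope, not joint stationarity) is a point the paper leaves implicit but uses in Theorem~\ref{thm:converge}.
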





\textbf{Theorem~\ref{thm:converge}.}
\textit{
Under assumptions~\ref{assump1} and ~\ref{assump2} and learning rate with learning rate $\eta_\pi = \Theta(\epsilon^4/l^3 \zeta^2 \Lambda^2)$ and $\eta_{\bm{\lambda}} = \Theta(1/l)$, {\name} converges to an $\epsilon$-stationary point with convergence rate $\mathcal{O}\left(\frac{l^3 \zeta^2 \Lambda^2 C_1}{\epsilon^6} + \frac{l^3 \Lambda^2 C_2}{\epsilon^4}\right)$.
}

\begin{proof}
We consider the following constraint nonconvex-concave optimization:
\begin{align}
    \min_{\pi} \max_{0\le\bm{\lambda}\le\Lambda} L(\pi, \bm{\lambda})\,.
\end{align}

Following Lemma \ref{lem4},
we know that the Two-Timescale GDA algorithm converges to an $\epsilon$-stationary point $\pi_i^*$. 


From the above deduction, the Two-Timescale GDA algorithm requires $\mathcal{O}\left(\frac{l^3 \zeta^2 \Lambda^2 C_1}{\epsilon^6} + \frac{l^3 \Lambda^2 C_2}{\epsilon^4}\right)$ iterations with learning rate $\eta_\pi = \Theta(\epsilon^4/l^3 \zeta^2 \Lambda^2)$ and $\eta_{\bm{\lambda}} = \Theta(1/l)$ to converge to an $\epsilon$-stationary point with convergence rate.


\end{proof}

\section{Discussion}
\label{app:discussion}

\subsection{The Failure Case of State-Distance-Based Diversity Measures}

A failure case of state-distance-based diversity measures may be when the state space includes many \emph{irrelevant features}. These features cannot reflect behavioral differences. If we run {\name} in such an environment, the learned strategies may be only diverse w.r.t these features and have little visual distinction. Like the famous noisy TV problem~\citep{noisytv}, the issue of irrelevant features is intrinsically challenging for general RL applications, which cannot be resolved by using action-based or state-occupancy-based diversity measures either.

Thanks to the advantages we discussed in the paper, we generally find that state-distance-based measures can be preferred in challenging RL problems. Meanwhile, since the state dimension can be much higher than actions, it is possible that RL optimization over states may be accordingly more difficult than actions. In practice, we can design a feature selector for those most relevant features for visual diversity and run diversity learning over the filtered features. In SMAC and GRF, we utilize the agent features (excluding enemies) as the input of diversity constraint without further modifications, as discussed in Appendix~\ref{app:imple}. We remark that even after filtering, the agent features remain high-dimensional while our algorithm still works well. Note that using a feature selector is a common practice in many existing domains, such as novelty search~\citep{cully_robots_2015}, exploration~\citep{DBLP:conf/icml/LiuJYS21}, and curriculum learning~\citep{amigo}. There are also works studying how to extract useful low-dimensional features from observations~\citep{laplacian,DBLP:conf/iclr/GhoshGL19}, which are orthogonal to our focus.

\subsection{The Distance Metric}

In Sec.~\ref{sec:method}, we adopt the two most popular implementations in the machine learning literature, i.e., RBF kernel and Wasserstein distance, while it is totally fine to adopt alternative implementations. For example, we can learn state representations (e.g. auto-encoder, Laplacian, or successor feature) and utilize pair-wise distance or norms as a diversity measure. Similar topics have been extensively discussed in the exploration literature~\citep{laplacian,exploration_succ}. We leave them as our future directions.

\section{Pseudocode of {\name}}
\label{app:algorithmic}

The pseudocode of {\name} is shown in Algorithm~\ref{algo:SIPO}.

\begin{algorithm}
    \caption{{\name} (\textcolor{red}{red} for \textcolor{red}{{\namerbf}} and \textcolor{blue}{blue} for \textcolor{blue}{{\namewd}})}  
    \label{algo:SIPO}
\begin{algorithmic}[1]
\renewcommand\algorithmicrequire{\textbf{Input:}}
  \REQUIRE{Number of Iterations $M$, Number of Training Steps within Each Iteration $T$.
  }
\renewcommand\algorithmicrequire{\textbf{Hyperparameter:}}
\REQUIRE{Learning Rate $\eta_\pi$, Diversity Threshold $\delta$, Intrinsic Scale Factor $\alpha$, 
  Lagrange Multiplier Upperbound $\lambda_\textrm{max}$, Lagrange Learning rate $\eta_\lambda$, \textcolor{blue}{Wasserstein Critic Learning Rate $\eta_W$}, \textcolor{red}{RBF Kernel Variance $\sigma$}.
  }
  \renewcommand\algorithmiccomment[1]{\hfill // {#1}}
  \STATE{Archived trajectories $X\gets\emptyset$}\COMMENT{to store states visited by previous policies}
  \FOR{iteration $i=1,\dots,M$}
    \STATE{Initialize policy $\pi_{\theta_i}$}\COMMENT{initialization}
    \STATE{\textcolor{blue}{Initialize Wasserstein critic $f_{\phi_i}$}}
    \FOR{archive index $j=1,\dots,i-1$}
      \STATE{Lagrange multiplier $\lambda_j\gets0$}
    \ENDFOR
    \FOR{Training step $t=1,\dots,T$}
      \STATE{Collect trajectory $\tau=\left\{\left(s_h,\bm{a}_h,r(s_h,\bm{a}_h)\right)\right\}_{h=1}^H$}
      \FOR{archive index $j=1,\dots,i-1$}
        \STATE{$R^j_\textrm{int}\gets0$}
      \ENDFOR
      \FOR{timestep $h=1,\dots,H$}
        \STATE{$r_{\textrm{int},h} \leftarrow 0$}\COMMENT{compute intrinsic reward}
        \FOR{archive trajectory $\chi_j \in X$}
          \STATE{\textcolor{red}{$r^j_{\textrm{int},h} \gets  - \frac{1}{H\vert\chi_j\vert}\sum_{s^\prime\in\chi_j}\exp\left(-\frac{\Vert s_h-s^\prime\Vert^2}{2\sigma^2}\right)$}}
          \STATE{\textcolor{blue}{$r^j_{\textrm{int},h} \gets  \frac{1}{H}\left[f_{\phi_j}(s_h)-\frac{1}{\vert\chi_j\vert}\sum_{s^\prime\in\chi_j} f_{\phi_j}(s^\prime)\right]$}}
          \STATE{$r_{\textrm{int},h} \gets r_{\textrm{int},h} + \lambda_j\cdot r^j_{\textrm{int},h}$}
          \STATE{$R^j_{\textrm{int}} \gets R^j_{\textrm{int},h} + r^j_{\textrm{int},h}$}
        \ENDFOR
        \STATE{$r_h\gets r(s_h,\bm{a}_h)+\alpha\cdot r_{\textrm{int},h}$}
      \ENDFOR
      \FOR{archive index $j=1,\dots,i-1$}
        \STATE{$\lambda_j \gets \textrm{clip}\left(\lambda_j +\eta_\lambda \left(- R^j_{\textrm{int}} + \delta\right), \,0, \,\lambda_\textrm{max}\right)$}\COMMENT{gradient ascent on $\lambda_j$}
        \STATE{\textcolor{blue}{
        $\phi_j\gets\phi_j+\eta_W\frac{1}{H}\sum_{h=1}^H \nabla_{\phi_j}\left(
        f_{\phi_j}(s_h)-\frac{1}{\vert\chi_j\vert}\sum_{s^\prime\in\chi_j} f_{\phi_j}(s^\prime)
        \right)$
        }}
        \STATE{\textcolor{blue}{$\phi_j\gets\textrm{clip}(\phi_j, -0.01, 0.01)$}}
      \ENDFOR
      \STATE{Update $\pi_{\theta_i}$ with $\{(s_h,\bm{a}_h,r_h)\}$ by PPO algorithm}\COMMENT{policy gradient on $\theta_i$}
    \ENDFOR
    \STATE{Collect many trajectories $\chi_i$}\COMMENT{collect trajectories to approximate $d_{\pi_{\theta_i}}$}
    \STATE{$X\gets X\cup \{\chi_i\}$}\COMMENT{for the use of following iterations}
  \ENDFOR
\end{algorithmic}  
\end{algorithm}

\end{document}